\documentclass{article} 
\usepackage{iclr2026_conference,times}

\usepackage{amsmath,amsfonts,bm}

\def\eqref#1{equation~\ref{#1}}

\def\1{\bm{1}}

\def\ve{{\bm{e}}}

\def\vy{{\bm{y}}}

\def\mA{{\bm{A}}}
\def\mB{{\bm{B}}}

\def\mE{{\bm{E}}}

\def\mI{{\bm{I}}}

\def\mK{{\bm{K}}}

\def\mM{{\bm{M}}}

\def\mQ{{\bm{Q}}}

\def\mS{{\bm{S}}}

\def\mV{{\bm{V}}}
\def\mW{{\bm{W}}}
\def\mX{{\bm{X}}}
\def\mY{{\bm{Y}}}
\def\mZ{{\bm{Z}}}

\DeclareMathAlphabet{\mathsfit}{\encodingdefault}{\sfdefault}{m}{sl}
\SetMathAlphabet{\mathsfit}{bold}{\encodingdefault}{\sfdefault}{bx}{n}

\usepackage{hyperref}
\usepackage{url}
\usepackage{booktabs}       
\usepackage{float}
\usepackage{multirow}
\usepackage{subcaption}
\usepackage{graphicx}
\usepackage{algorithm}        
\usepackage{wrapfig} 
\usepackage{enumitem}
\usepackage{amsthm}

\usepackage{algpseudocode}    

\newtheorem{proposition}{Proposition}

\graphicspath{{./PLOTS/}}  
\title{Low Rank Transformer for Multivariate Time Series Anomaly Detection and Localization}

\author{
Charalampos Shimillas$^{1,2}$,
Kleanthis Malialis$^{1}$,
Konstantinos Fokianos$^{3}$,
Marios M. Polycarpou$^{1,2}$ \\ [1ex]
$^{1}$KIOS Research and Innovation Center of Excellence, Nicosia, Cyprus \\
$^{2}$Department of Electrical and Computer Engineering, University of Cyprus, 
Nicosia, Cyprus \\
$^{3}$Department of Mathematics and Statistics, University of Cyprus, Nicosia, 
Cyprus \\[1ex]
\texttt{\{shimillas.charalampos,\allowbreak
malialis.kleanthis,\allowbreak
fokianos.konstantinos,\allowbreak
mpolycar\}@ucy.ac.cy} \\
}

\makeatother

\iclrfinalcopy 

\begin{document}

\maketitle

\begin{abstract}
Multivariate time series (MTS) anomaly diagnosis, which encompasses both anomaly detection and localization, is critical for the safety and reliability of complex, large-scale real-world systems. The vast majority of existing anomaly diagnosis methods offer limited theoretical insights, especially for anomaly localization, which is a vital but largely unexplored area. The aim of this contribution is to study the learning process of a Transformer when applied to MTS by revealing connections to statistical time series methods. Based on these theoretical insights, we propose the Attention Low-Rank Transformer (ALoRa-T) model, which applies low-rank regularization to self-attention, and we introduce the Attention Low-Rank score, effectively capturing the temporal characteristics of anomalies. Finally, to enable anomaly localization, we propose the ALoRa-Loc method, a novel approach that associates anomalies to specific variables by quantifying interrelationships among time series. Extensive experiments and real data analysis, show that the proposed methodology significantly outperforms state-of-the-art methods in both detection and localization tasks.
\end{abstract}
\section{Introduction}
Driven by the rapid growth of the Internet of Things (IoT), real-world systems have become increasingly more complex and vulnerable to faults. These anomalies frequently result in abnormal patterns for a stream of MTS. In this context, the diagnosis of anomalies in MTS is of great importance to ensure the reliability, safety, and efficiency of critical systems. Due to the scarcity of labeled data, anomaly diagnosis is commonly formulated as an unsupervised learning problem. It typically involves two key tasks: anomaly detection, which determines which timestamps are anomalous, and anomaly localization, which identifies the specific time series responsible for the detected anomalies.

MTS data exhibit complex dynamics, including temporal dependencies (relationships over time) and spatial dependencies (relationships across series). Effective anomaly diagnosis depends on reliably estimating these spatio-temporal dynamics. Deep learning models are widely applied to this task for their strong representation learning capabilities, with Transformer-based architectures shown to be especially effective in modeling the complex dynamics of MTS \citep{TranMTSReprLearning}. However, the limited available theoretical insights into their decision process undermine reliability and trust in safety-critical settings, while also complicating anomaly localization. Practitioners often raise important questions regarding the learning process, the interrelationships within the data learned by the model, and the need for a localization method, since detection alone provides limited practical value in complex, large-scale systems. As illustrated in Fig.~\ref{fig:LocMethod}, two of the main contributions of this work are to address these open questions, which have remained largely unanswered.

\begin{figure}[h]
 \centering
    \includegraphics[width=0.9\linewidth]{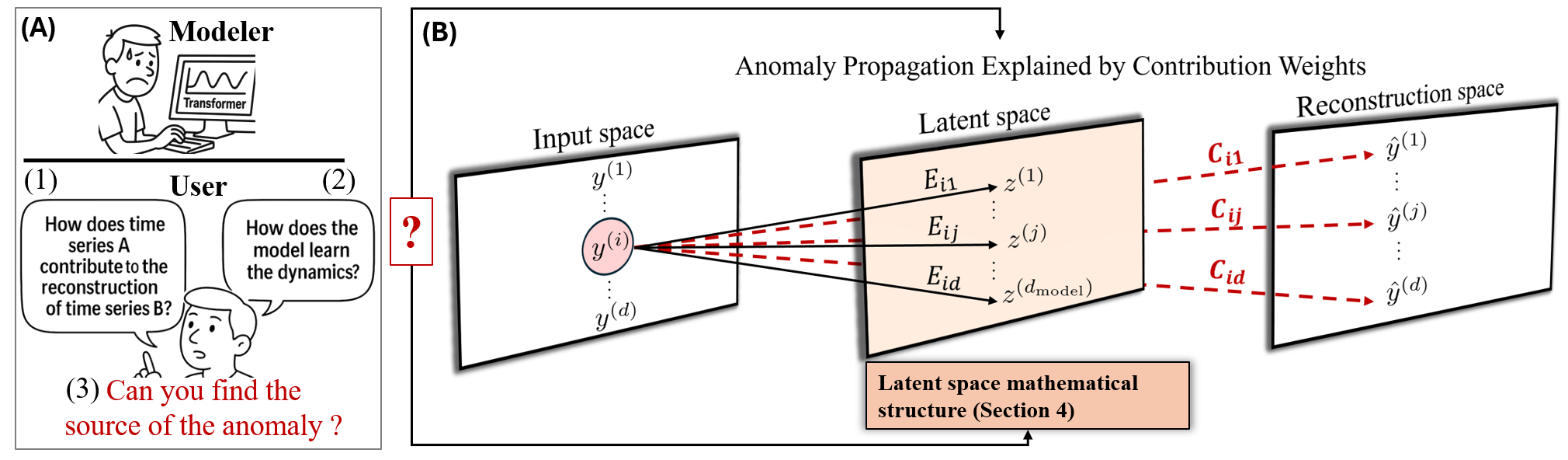}
\caption{\textbf{(A):} Unresolved key questions commonly raised by practitioners.
\textbf{(B):} The proposed localization method (Sec.~\ref{MethodLocalization}), and the theoretical analysis (Sec.~\ref{Section4}), addresses these challenges. The Figure illustrates anomaly propagation in MTS data. An anomaly originating in an input series \( i  \in \{1,\cdots,d\}\) can propagate to the latent space and subsequently to the reconstructed outputs. This propagation is quantified by contribution weights: \(E_{ij} \in \mathbb{R}\), capturing the influence of input series \(i\) on latent feature \(j \in \{1,\cdots,d_{\text{model}}\}\), and \(C_{ij} \in \mathbb{R}\), capturing its influence on the reconstructed output series \(\{j \in{1,\cdots, d}\}\). Together, these weights enable tracing of anomalies across different stages of the model, thereby supporting effective localization and interpretability.}
   \label{fig:LocMethod}
\end{figure}
In addition, several existing detection methods are built around the point-adjustment evaluation strategy, whereby detecting any point within an anomalous segment suffices to consider the entire segment as an observed anomaly \citep{xu2022anomaly,song2023memto,shen2020timeseries}. This approach often inflates performance metrics, since even random scoring methods can appear competitive under such evaluation \citep{kim2021towards,huet2022local}. To ensure reliable evaluation, trustworthy detection metrics have to accurately reflect the temporal characteristics of anomalies. \\

To address the challenges of reliable detection scoring, interpretability, and effective localization, this study makes the following contributions:

\begin{itemize}
    \item \textbf{Theoretical insights of Transformer encoders on MTS:} 
    We study how Transformer encoders learn from MTS data, shedding light on how the learned representations relate to classical time series.  These insights advance the design of anomaly diagnosis methods and deepen the understanding of Transformers in sequential modeling. (Section \ref{TheoriticalAnalysis})
    \item \textbf{Attention Low-Rank Transformer (ALoRa-T) for anomaly detection:} 
    We propose the ALoRa-T method, which enhances anomaly detection by applying low-rank regularization to self-attention, and introduce a novel detection metric, the Self-Attention Low-Rank score (ALoRa-T score). This method outperforms state-of-the-art methods across multiple benchmark datasets, and captures effectively the temporal characteristics of anomalies. (Section \ref{ALRT})
 \item \textbf{ALoRa-Loc method for anomaly localization:} We introduce ALoRa-Loc, a localization method that first derives contribution weights that quantify the learned interrelationships between time series, offering key insights into the model’s decision-making process. More critically, these weights capture how anomalies propagate across time series during modeling, enabling ALoRa-Loc to trace anomalies back to their origin and attribute them to the most relevant input time series. (Section \ref{MethodLocalization})
\end{itemize}
\section{Related work}
\noindent\textbf{Anomaly detection:} Due to its importance for the safety and reliability of many critical infrastructure systems, such as water distribution networks \citep{10817792}, power systems \citep{kyriakides2015intelligent} and healthcare \citep{tang2022selfsupervised}, unsupervised anomaly detection in MTS has attracted significant attention. Traditional unsupervised methods~\citep{liu2008isolation,LOF,OCSVM} have been applied to this task, but may not perform well since they do not learn complex dependencies. To address these limitations, deep learning models that capture temporal and inter-variable relationships have shown improvements over existing methods. A major class of deep learning approaches is reconstruction-based methods, typically using an encoder-decoder architecture trained in a self-supervised manner. The main idea is to learn meaningful latent representations that mimic the normal behavior. The models are expected to reconstruct normal sequences accurately. An  unsuccessful reconstruction, in the sense of inflated errors, indicates potential anomalies in the data. Representative examples include RNN-based models such as LSTM-VAE~\citep{LSTMVAE}, OmniAnomaly~\citep{su_robust_2019}, and InterFusion~\citep{li_multivariate_2021}, as well as the CNN-based model MSCRED~\citep{MSCRED}. More recently, Transformer models ~\citep{AttentioIsAll} have been proven highly effective for sequential modeling and well-suited to the complex dynamics of MTS \citep{nie2023a}. Anomaly-Transformer~\citep{xu2022anomaly} introduces the AssDis metric, which compares each row of the self-attention matrix to a prior Gaussian distribution. MEMTO~\citep{song2023memto} enhances a Transformer encoder with a gated memory module that learns memory items representing normal behavior. As new data arrives, it selectively updates these items. For anomaly detection, they introduce the LSD score, the distance between input queries and their closest memory items. Although the aforementioned models have shown strong performance under point-adjustment metrics, they may often not produce meaningful and reliable anomaly scores, as illustrated later in Figure~\ref{anomaly_score_figure}. \citet{lai2023nominality} use the Performer model~\citep{choromanski2021rethinking} to perform both point-wise and segment-wise reconstruction, defining the anomaly score as the ratio of their reconstruction errors. SARAD \citep{dai2024sarad} uses a Transformer on transposed input windows, and thus the attention weights express feature-wise associations. To guide training and detect anomalies, SARAD relies on SAR metric, which tracks changes in attention weights, based on the intuition that anomalies disrupt feature associations.

\noindent\textbf{Anomaly Localization:} \label{sec:Related}
Anomaly localization has remained to a large extent an unexplored research area in deep learning. Recently it begun to attract more attention, with some of the aforementioned approaches also addressing this critical issue. Localization effectiveness depends on two key factors: first, the model's ability to reconstruct normal patterns accurately. Second, the ability to interpret the reconstruction errors in a way that isolates the true sources of anomalies without being distorted by their effects. Some methods rely directly on the reconstruction error to inform the localization. For example, SARAD and OmniAnomaly \citep{su_robust_2019} perform localization by ranking the reconstruction errors of individual time series. While this provides a baseline approach, anomaly propagation can obscure true anomaly sources. More advanced techniques refine the reconstruction signal to improve localization. InterFusion \citep{li_multivariate_2021} applies Markov Chain Monte Carlo to adjust reconstruction errors. However, due to its computational complexity, it is limited to interpreting anomaly segments instead of individual time steps. DAEMON \citep{DAEMON2} leverages integrated gradients \citep{IGradients} to attribute anomalies back to individual dimensions. Finally, AERCA \citep{han2025root} focuses exclusively on the localization task, without addressing detection and localization simultaneously, while providing an interpretable approach and representing one of the most recent and effective methods in this area.

\noindent\textbf{Research gaps:} Although the aforementioned methods propose innovative architectures and scoring metrics, they often overlook key aspects: understanding what the model learns, how each time series is represented, and how these representations contribute to the final decisions. Such understanding is crucial for making detection applicable and trustworthy, and for enabling effective localization.

\section{Multivariate time series (MTS) notation}
A Multivariate Time Series (MTS) is a sequence of observations recorded over time across multiple variables. It is denoted as \( \mY = [\vy_1, \vy_2, \dots, \vy_{N}]^{\top} \), where each row vector \( \vy_t \in \mathbb{R}^d \), \(t=1,\dots , N\), denotes the multivariate observation at time step \( t \). Each column vector of \(\mY\), denoted as \( \vy^{(i)} \in \mathbb{R}^N \) represents the \( i \)-th univariate time series of length \( N \), and \( y_t^{(i)} \in \mathbb{R} \) is the value of the \( i \)-th series at time \( t \). Given the streaming nature of time-series data, modeling the MTS at an arbitrary time $t$ relies on it's history \(\mY_{(t-T:t]} \), defined as the most recent $T$ observations up to $t$:  \( \mY_{(t-T:t]} = [ \vy_{t-T+1}, \dots, \vy_t ]^{\top} \in \mathbb{R}^{T \times d}.\) Applying this construction to the full MTS $\mY \in \mathbb{R}^{N \times d}$ produces a sequence of overlapping windows, each serving as the model input. Since the subsequent sections concern an arbitrary time \( t \), for simplicity we denote the window \( \mathbf{Y}_{(t-T:t]} \) by \( \mathbf{Y}_{[t]} \).

\section{Theoretical analysis of transformer encoder}
\label{Section4}
\label{TheoriticalAnalysis}

This section presents a theoretical analysis of the learning dynamics of the Transformer encoder in the context of MTS. In particular, it relates the components of a standard Transformer model to classical statistical models.

\noindent\textbf{Embedding:} Given a MTS window at time \( t \), denoted as \( \mathbf{Y}_{[t]} \in \mathbb{R}^{T \times d} \), a Transformer encoder typically begins by embedding the input into a higher-dimensional space using a 1D-convolutional layer, i.e it maps \( \mathbf{Y}_{[t]} \) to \( \tilde{\mathbf{Y}}_{[t]} \in \mathbb{R}^{T \times d_{\text{model}}} \), where \( d_{\text{model}} \) is the embedding (model) dimension. Applying the 1D-convolution, the \( k \)-th embedded time series value at time step \( t \), \( \tilde{y}_t^{(k)} \), is computed as:
\begin{equation}
    \tilde{y}_t^{(k)} = \sum_{i=1}^{d}  
\underbrace{
\left( \sum_{j = -\frac{m-1}{2}}^{\frac{m-1}{2}} w_{i,j}^{(k)} \cdot y_{t+j}^{(i)} \right)
}_{\text{Weighted average of the $i$'th raw time series}}
\end{equation}
This operation is mathematically equivalent to a learnable \textbf{Vector Moving Average (VMA) filtering} \citep{brockwell2002time}, where the output at time \( t \) is a weighted sum of multiple time series values in a local window around \( t \). The filter weights \( w_{i,j}^{(k)} \) determine the influence of time series \( i \) on the output feature \( k \), at lag \( j \). The constant \(m\) is the kernel size.

\noindent\textbf{Self-Attention Latent Space:}  
\label{selfattentionLayeres}  
The output of the embedding layer, \( \tilde{\mathbf{Y}}_{[t]} \in \mathbb{R}^{T \times d_{\text{model}}} \), is passed as input to the attention mechanism. At each layer \( l \), the self-attention mechanism projects the previous layer’s output \( \mZ^{(l-1)} \in \mathbb{R}^{T \times d_{\text{model}}} \) into the Query (\(\mQ^l\)), Key (\(\mK^l\)), and Value (\(\mV^l\)) matrices using learnable projection matrices \( \mW^{(Q,l)} = \{w_{ij}^{(Q,l)}\} \), \( \mW^{(K,l)} = \{w_{ij}^{(K,l)}\} \), and \( \mW^{(V,l)} = \{w_{ij}^{(V,l)}\} \), each of dimension \( \mathbb{R}^{d_{\text{model}} \times d_{\text{model}}} \). These projections are computed as:  
\[
\mQ^l = \mZ^{(l-1)} \mW^{(Q,l)}, \quad 
\mK^l = \mZ^{(l-1)} \mW^{(K,l)}, \quad 
\mV^l = \mZ^{(l-1)} \mW^{(V,l)}.
\]  

The self-attention scores at layer \( l \) are computed as the matrix \( \mS^{(l)} \in \mathbb{R}^{T \times T} \):  
\begin{equation}
\label{eq:Self-Attention-Weights}
\mS^{(l)} = \text{softmax}\left( \frac{\mQ^l (\mK^l)^{\top}}{\sqrt{d_{\text{model}}}} + \mM \right),
\end{equation}  
where \( \mM \in \mathbb{R}^{T \times T} \) is an optional masking matrix, and the softmax is applied row-wise. The latent representation at layer \( l \) is updated through the residual attention mechanism, defined as:  
\begin{equation}\label{eq:skip-dynamicsmain}
\mZ^{(l)} = \tilde{\mZ}^{(l)} + \mZ^{(l-1)}, 
\quad \text{where} \quad 
\tilde{\mZ}^{(l)} = \mS^{(l)} \mZ^{(l-1)} \mW^{(V,l)}.
\end{equation}  
If skip connections are omitted, the update simplifies to \( \mZ^{(l)} = \tilde{\mZ}^{(l)} \). By unrolling Eq.~(\ref{eq:skip-dynamicsmain}), we can derive formulations that relate the final latent representation, to useful statistical models. In the absence of skip connections, the latent representation at time step \( t \) can be written as:  
\begin{equation}\label{eq:LatentspaceZ}
\mZ_t = \mA_t \tilde{\mathbf{Y}}_{[t]} \mB,
\end{equation}  
where \( \mA_t = \mS^{(L)}_{t,:} \mS^{(L-1)} \cdots \mS^{(1)} \in \mathbb{R}^{1 \times T} \), \( \mB = \mW^{(V,1)} \cdots \mW^{(V,L)} \in \mathbb{R}^{d_{\text{model}} \times d_{\text{model}}} \) and \( \mS^{(L)}_{t,:} \in \mathbb{R}^{1 \times T} \) denotes the \( t \)-th row of the final self-attention (SA) matrix \( \mS^{(L)} \). When skip connections are included (the standard case), the final representation becomes
\begin{equation}\label{eq:general-skip}
\mZ^{(L)} = \tilde{\mathbf{Y}}_{[t]} + \sum_{\emptyset \neq I \subseteq \{1, \dots, L\}} 
\left( \prod_{i \in I^{\downarrow}} \mS^{(i)} \right) 
\tilde{\mathbf{Y}}_{[t]}
\left( \prod_{i \in I^{\uparrow}} \mW^{(V, i)} \right),
\end{equation}  
where \( I^\downarrow \) and \( I^\uparrow \) denote the indices in descending and ascending order, respectively. Based on Eqs~(\ref{eq:LatentspaceZ}) and~(\ref{eq:general-skip}), 
the transformation of the input data in the Transformer is expressed as a 
left multiplication of the self-attention matrices across layers and a right 
multiplication of the Value projection matrices. The former is input-dependent, 
while the latter are learned parameters independent of the data. These two 
equations form the basis for the following proposition, which defines the structure of the self-attention latent space. An analytical proof is provided in Appendix~\ref{abalyticalproofs}, along with additional details on the derived statements.

\begin{proposition}[Space-Time Autoreggresive (STAR) structure of the self-attention latent space]
\label{prop:star} 
\hspace{0.05cm} 
\begin{enumerate}[leftmargin=3pt, label=\textbf{\arabic*.}]
  \setlength{\itemsep}{2pt}   
\item \textbf{Without skip connections:} Each time series in the Transformer’s latent space follows a STAR-like structure. In particular, it can be expressed as  
\( z^{(j)}_t = \sum_{k=1}^{d_{\text{model}}} b_{kj} \left( \sum_{q=1}^t a_{tq}\,\tilde{y}^{(k)}_q \right), \) which has the exact same form as the classical STAR model~\citep{cressie2011statistics}. The key distinction is that, while traditional STAR models use fixed lag weights \( a_{tq} \) estimated by minimizing a loss function (e.g., mean squared error), the Transformer computes these weights dynamically from the input through the attention mechanism, with \( Q \) and \( K \) guiding their estimation in real time.  
\item \textbf{With skip connections:} Each time series in the final representation, Eq. (\ref{eq:general-skip}), can be interpreted as a \textbf{linear combination of multiple STAR-like processes}, where each component captures distinct temporal and feature-level dependencies of the original MTS.
\item \textbf{With feed-forward layers:} Adding feed-forward layers does not alter the STAR-like structure of the latent space. The only difference is that the spatial weights take a more complex form, as explained analytically in Proof~\ref{proof:star}.
\end{enumerate}
\end{proposition}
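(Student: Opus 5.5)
The plan is to prove all three parts by induction on the number of layers $L$, unrolling Eq.~(\ref{eq:skip-dynamicsmain}) and then reading off entries. The single fact used throughout is associativity of matrix products: the attention matrices act on the time index from the left, and the Value projections act on the feature index from the right.

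\textbf{Part 1 (no skip connections).} Set $\mZ^{(0)}=\tilde{\mY}_{[t]}$. Suppose $\mZ^{(l-1)}=\mS^{(l-1)}\cdots\mS^{(1)}\tilde{\mY}_{[t]}\mW^{(V,1)}\cdots\mW^{(V,l-1)}$. Then
\[
\mZ^{(l)}=\mS^{(l)}\mZ^{(l-1)}\mW^{(V,l)}=\bigl(\mS^{(l)}\cdots\mS^{(1)}\bigr)\,\tilde{\mY}_{[t]}\,\bigl(\mW^{(V,1)}\cdots\mW^{(V,l)}\bigr),
\]
which closes the induction. Taking row $t$ at $l=L$ gives Eq.~(\ref{eq:LatentspaceZ}).
\begin{itemize}
\item Write $\mA=\mS^{(L)}\cdots\mS^{(1)}=\{a_{tq}\}$ and $\mB=\{b_{kj}\}$.
\item The $(t,j)$ entry is $z^{(j)}_t=\sum_{k}\sum_{q}a_{tq}\,\tilde y^{(k)}_q\,b_{kj}$, which is the claimed STAR form.
\item The sum over $q$ runs up to $t$ under a causal mask $\mM$. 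Each $\mS^{(l)}$ is then lower triangular, and a product of lower-triangular matrices is lower triangular. Without a mask, $q$ ranges over the whole window.
\item $a_{tq}$ plays the role of the space-time lag weight, and $b_{kj}$ plays the role of the spatial weight.
\item Since each $\mS^{(l)}$ is a softmax of $\mQ^l(\mK^l)^\top$, the weights $a_{tq}$ depend on the input. Since each $\mS^{(l)}$ is row-stochastic, $\mA$ is row-stochastic too, so every $a_{tq}$ is a convex-combination weight. This matches the usual normalization of STAR lag weights.
\end{itemize}

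\textbf{Part 2 (skip connections).} Again use induction, now with $\mZ^{(l)}=\mS^{(l)}\mZ^{(l-1)}\mW^{(V,l)}+\mZ^{(l-1)}$.
\begin{itemize}
\item Assume $\mZ^{(l-1)}$ equals the sum over subsets $I\subseteq\{1,\dots,l-1\}$ in Eq.~(\ref{eq:general-skip}). The empty set contributes $\tilde{\mY}_{[t]}$.
\item The identity term reproduces every subset that omits $l$.
\item The attention term maps the summand for $I$ to the summand for $I\cup\{l\}$. The new $\mS^{(l)}$ lands on the far left, which keeps descending order, and the new $\mW^{(V,l)}$ lands on the far right, which keeps ascending order.
\item Together these give a bijection onto the subsets of $\{1,\dots,l\}$, which proves Eq.~(\ref{eq:general-skip}).
\item Each summand has the Part~1 form, with $\mA_I=\prod_{I^\downarrow}\mS^{(i)}$ and $\mB_I=\prod_{I^\uparrow}\mW^{(V,i)}$. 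Hence $z^{(j)}_t=\sum_I\sum_k b^{I}_{kj}\sum_q a^{I}_{tq}\tilde y^{(k)}_q$.
\item This is a sum of $2^L$ STAR-like processes. Each one carries its own temporal weights, according to which attention layers it passes through, and its own spatial weights.
\end{itemize}

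\textbf{Part 3 (feed-forward layers).} This is the step I expect to be the main obstacle. A position-wise FFN acts row by row, $\mZ\mapsto\sigma(\mZ\mW_1)\mW_2$, so it never mixes time steps, only features. Since $\sigma$ is nonlinear, a linear right-multiplication cannot be pulled out directly. My approach is to write the activation entrywise as an input-dependent diagonal scaling at each time step: for ReLU, $\sigma(x)=D\,x$ with $D$ the 0/1 activation pattern; in general, use the ratio $\sigma(x)/x$ or a first-order expansion.
\begin{itemize}
\item Row $t$ of the FFN output is then $\mZ_t\mW_1\mathrm{diag}(\delta_t)\mW_2$, i.e.\ the row times a time-dependent feature matrix $\mW_1\mathrm{diag}(\delta_t)\mW_2$.
\item The temporal mixing is still carried out entirely by the left factors $\mS^{(l)}$, so the structure $\sum_k b_{kj}\sum_q a_{tq}\tilde y^{(k)}_q$ survives.
\item The cost is that the spatial weights $b_{kj}$ become composite: they are products of the Value matrices with these FFN factors, and they depend on $t$ or $q$ and on the input.
\end{itemize}
The delicate point is exactly this: the diagonal factor depends on the time index of the row it acts on. Pushing it through a later attention layer therefore gives $q$-dependent spatial weights $b^{(q)}_{kj}$. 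I would state the result in that generality and note that the STAR form, meaning temporal weights on the left and spatial weights on the right, is preserved, with more complex spatial weights.
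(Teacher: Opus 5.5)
Your Parts 1 and 2 follow the paper's argument: unroll the recursion, then read off the $(t,j)$ entry of each summand $\mA^{(j)}_t\tilde{\mY}_{[t]}\mB^{(j)}$. Your subset-bijection induction actually proves Eq.~(\ref{eq:general-skip}), whereas the paper only asserts it and writes out the case $L=3$. (The paper gets the upper limit $q\le t$ because $t$ is the last row of the window, not from a causal mask; either reading works.)

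Part 3 is where you genuinely diverge. The paper never meets the obstacle you identify, because it models the feed-forward layer as a linear, time-invariant map at each time step, $y^{(j)}_t=\sum_r w_{rj}z^{(r)}_t$, with no activation. That map is a right multiplication by one fixed matrix, so it commutes past every attention matrix and is absorbed as $\tilde b_{kj}=\sum_r w_{rj}b_{kr}$. The spatial weights therefore stay a single input-independent matrix shared across all lags, which is the property the contribution weights $E_{ij},C_{ij}$ exploit.

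Your gating argument is more faithful to a real nonlinear FFN, but it gives a weaker conclusion, and your final sentence overstates it. If the FFN follows the last attention layer, you do recover the Part 1 form, with spatial weights $\mB\mW_1\mathrm{diag}(\delta_t)\mW_2$ that depend on $t$ and on the input. If an attention layer follows the FFN, write $\mZ^{(l)}_q=\mathbf{a}_q\tilde{\mY}_{[t]}\mB_l$, with $\mathbf{a}_q$ the $q$-th row of $\mS^{(l)}\cdots\mS^{(1)}$ and $\mB_l=\mW^{(V,1)}\cdots\mW^{(V,l)}$. Row $t$ of the next layer is then
\[
\sum_{q=1}^{T}\bigl(S^{(l+1)}_{tq}\,\mathbf{a}_q\bigr)\,\tilde{\mY}_{[t]}\,\bigl(\mB_l\mW_1\mathrm{diag}(\delta_q)\mW_2\mW^{(V,l+1)}\bigr),
\]
where $q$ is an intermediate time index, not the input lag. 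The spatial factor cannot be pulled out of the temporal sum, so your claim that $\sum_k b_{kj}\sum_q a_{tq}\tilde y^{(k)}_q$ survives is false in this case. What you actually obtain is a Part~2-style linear combination of separable STAR-like terms, one per intermediate time step, with input-dependent spatial weights.

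To get the statement as written, you have three options:
\begin{itemize}
\item adopt the paper's linear-FFN simplification (your argument with $\delta\equiv 1$ reproduces it exactly);
\item restrict the FFN to after the last attention layer;
\item otherwise, state the conclusion in this mixture form rather than as the same STAR structure with more complex weights.
\end{itemize}
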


\paragraph{Linear Projection for the Reconstruction:}
The final reconstructed MTS at time step \( t \) is computed by applying a linear projection to the latent representation at the same time step:
\begin{equation}
\label{recostrctionEq}
    \hat{\mY}_t = \mZ_t^{(L)} \mW^{\text{out}} \quad \implies \quad \hat{y}^{(k)}_t = \sum_{j=1}^{d_{\text{model}}} w^{\text{out}}_{jk} z^{(L,j)}_t,
\end{equation}
where \( \mW^{\text{out}} \in \mathbb{R}^{d_{\text{model}} \times d} \) is the output projection matrix that maps the final latent representation to the original input space. This reveals that the model learns to reconstruct each time series using a linear combination of multiple STAR processes (Proposition~\ref{prop:star}).


\section{Proposed anomaly diagnosis method}
This section presents the proposed anomaly diagnosis method. Section~\ref{ALRT} introduces the ALoRa-T model and its associated anomaly detection score, which together form the ALoRa-Det module, with an overview provided in Fig.~\ref{fig:architecture}. Section~\ref{MethodLocalization} then presents the ALoRa-Loc method for anomaly localization, illustrated in Fig.~\ref{fig:LocMethod}. The corresponding pseudocodes are provided in Appendix~\ref{ExpSumpl} and a computational analysis of the model is provided in Appendix~\ref{ComplexityAnalysis_appendix}.
\begin{figure}[H]
    \centering
    \includegraphics[width=0.95\linewidth]{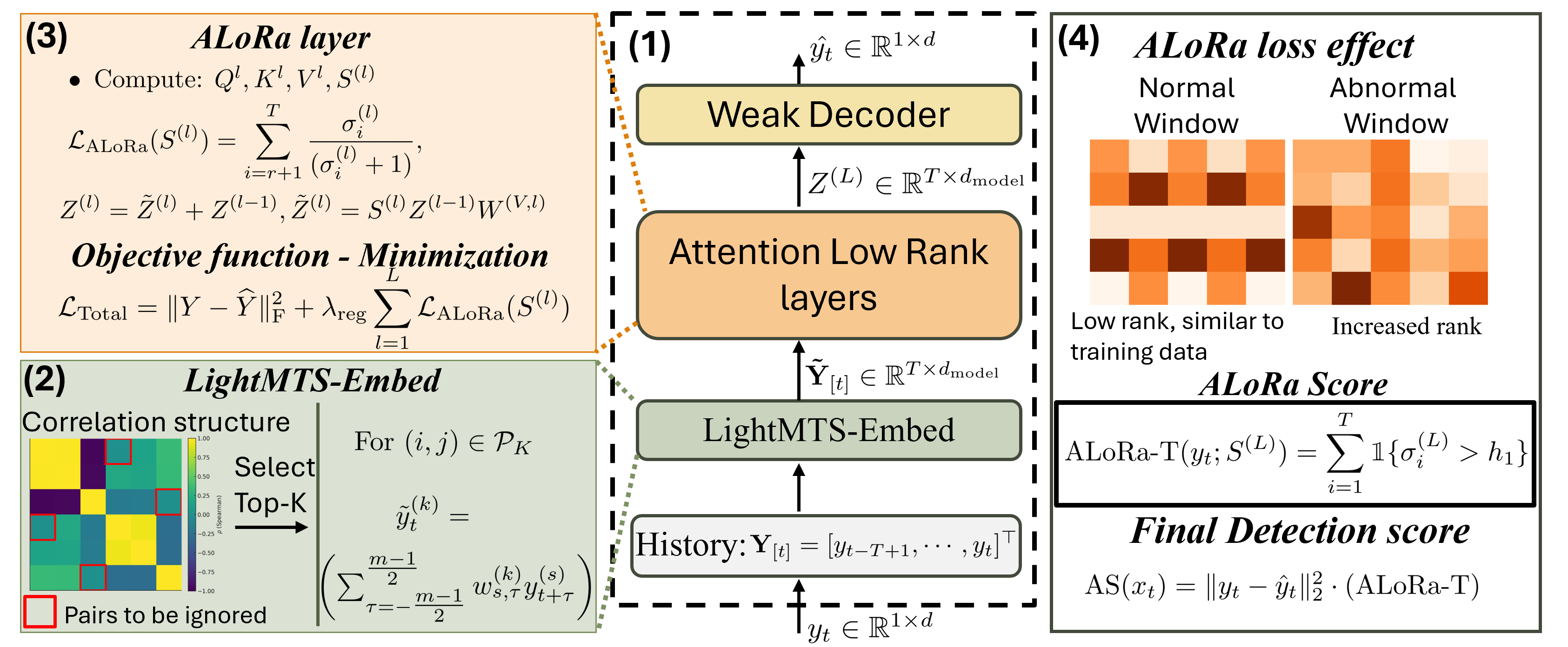}
   \caption{
Overview of ALoRa-T and ALoRa-Det. (1) The architecture comprises the LightSMTS-Embed module, ALoRa layers, and a decoder. (2) The embedding module exploits correlation structures to retain only the most significant time series pairs, significantly reducing complexity by avoiding unnecessary information. (3) ALoRa layers impose a low-rank constraint on the self-attention matrix through a novel loss and a regularization term in the objective function, producing a signal for abnormality. (4) During inference, anomalous windows yield higher attention ranks, which are captured by the ALoRa score, providing a clear indicator of anomalies.
}
\label{fig:architecture}
\end{figure}

\subsection{Attention low-rank transformer for anomaly detection (ALoRa-Det)}
\label{ALRT}
ALoRa-Det consists of the ALoRa-T architecture together with its detection scoring method. We begin by describing the ALoRa-T architecture. The model begins with a lightweight MTS embedding module, followed by multi-head low-rank self-attention layers (ALoRa layers) with skip connections. As stated in Proposition~\ref{prop:star} (see the proof in Section~\ref{abalyticalproofs}), feedforward layers do not alter the structure of the latent space and are therefore omitted to avoid unnecessary complexity. The reconstruction step is implemented through a linear projection layer. 

\noindent\textbf{LightMTS-Embed:}
\label{EmbLayer}
As shown in Section~\ref{TheoriticalAnalysis}, embedding with 1D convolutions is equivalent to applying VMA filtering to MTS. However, using fully dense filters that mix all input series across embedding dimensions is both computationally expensive and less interpretable. In practice, not all time series are correlated with one another, and ignoring this fact can obscure the underlying dynamics in subsequent layers while further reducing interpretability. To address this, \textbf{each convolutional kernel is restricted to aggregate information from exactly two input series}. Specifically, for each output time series \(k\), the (sparse) kernel contains only two non-zero weights \(w_{i,j}^{(k)}\), corresponding to a particular pair of input series. To avoid including pairs without strong dependence, only the top-\(K\) pairs are retained, ranked by Spearman correlation on the training data (commonly \(K=512\)). When the total number of possible pairs does not exceed \(K\), all \(\tbinom{d}{2}\) pairs are included. This design improves efficiency, promotes sparsity, and enhances interpretability while preserving performance. An ablation study on the parameter efficiency and performance of the proposed embedding module, compared with the standard Transformer embedding, is provided in Appendix~\ref{sec:embedding_efficiencyAp}.



\noindent\textbf{Low-Rank Regularization in Self-Attention:}
\label{Low-Rank Regularization}
Based on Eq.~(\ref{eq:LatentspaceZ}), since the SA-matrices are the only input-dependent learnable components, their spectral properties can serve as informative signals for anomaly detection. In particular, the \emph{rank} of a matrix, defined as the number of non-zero singular values, provides a useful indicator of abnormal behavior. Empirically, we observe that the rank of SA-matrices increases in the presence of anomalies, as illustrated in Fig.~\ref{REGULARIZATIONEFFEC} (right). Motivated by this observation, we propose \textbf{ALoRa layers}, which extend the standard self-attention mechanism by explicitly promoting a low-rank structure in the attention matrices. This is achieved throught the \emph{ALoRa loss}, a regularization term applied to each attention matrix \( \mS^{(l)} \), defined as the truncated Geman nuclear norm \citep{Geman}, which enforces singular values close to zero.
\begin{equation}
\label{eqAloraSl}
\mathcal{L}_{\text{ALoRa}}( \mS^{(l)}) = \sum_{i=r+1}^{T} \frac{\sigma_i^{(l)}}{(\sigma_i^{(l)}+ 1)},
\end{equation}
where \( \sigma_1^{(l)} \geq \cdots \geq \sigma_T^{(l)} \geq 0 \) are the singular values of \( \mS^{(l)} \). The parameter \( r \) specifies the number of leading singular values that are preserved without penalty. Since \( \mS^{(l)} \) is row-stochastic (Eq.~\ref{eq:Self-Attention-Weights}) with fixed largest singular value \( \sigma_1 = 1 \), we set \( r = 1 \), since penalizing it is unnecessary. The ALoRa layer is extended to the multi-head attention (MHA) setting by defining \(\mS^{(l)} = \tfrac{1}{H} \sum_{h=1}^{H} \mS^{(l)}_h\), where \(\mS^{(l)}_h\) denotes the self-attention matrix of head \(h = 1, \ldots, H\), and \(H\) is the number of heads.

\noindent\textbf{Training:}
\label{ModelTraining}
The objective function, \( \mathcal{L}_{\text{Total}}\), used to train the model includes two key components:
\begin{equation}
\label{obFunction}
\mathcal{L}_{\text{Total}}= \|\mY - \widehat{\mY}\|_{\text{F}}^2 + \lambda_{reg} \sum_{l=1}^{l=L} \cdot \mathcal{L}_{\text{ALoRa}}(\mS^{(l)}) \quad .
\end{equation}
The first term penalizes the Frobenius norm of the reconstruction error, while the second term encourages a low-rank structure by summing the ALoRa losses of the SA-matrices \( \mS^{(l)} \) across all layers. The regularization strength is controlled by the parameter \(\lambda_{\text{reg}}\). The effectiveness of the proposed low-rank regularization is illustrated in Fig.~\ref{REGULARIZATIONEFFEC}, which compares the rank of self-attention matrices with and without regularization. In both cases, rank differences between normal and anomalous inputs are evident but become substantially more pronounced with regularization. Consequently, anomalous patterns are more easily distinguished, providing a new abnormality signal.

\begin{figure}[h]
  \centering
  \includegraphics[width=0.87\linewidth]{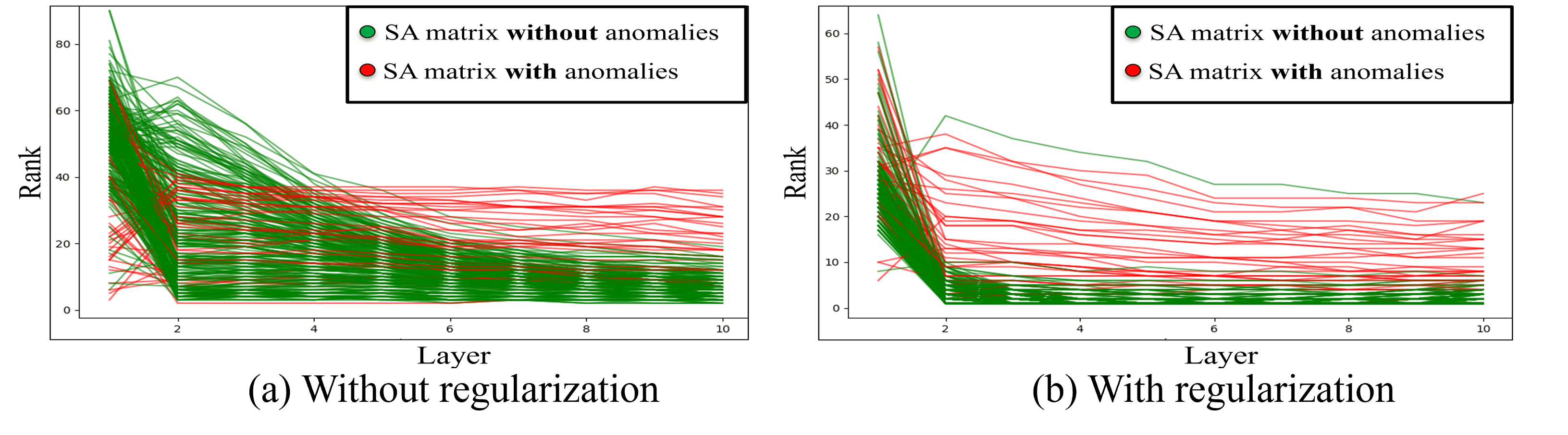} 
  \caption{
  Layer-wise rank of SA-matrices across transformer layers, without (left) and with (right) the low-rank regularization term, on the SMD dataset. The discrepancy in rank between normal and anomalous inputs becomes more pronounced with regularization, improving anomaly sensitivity.}
  \label{REGULARIZATIONEFFEC}
\end{figure}
\noindent\textbf{Model inference - Anomaly detection:}
\label{MethodInference}
As in training, the MTS is processed in overlapping windows to ensure fair reconstruction across all time steps, enabling unbiased anomaly detection \citep{toscano2022bias}. The detection anomaly score (AS) for the time steps \(t\) is defined as:
\begin{align}
\label{SALRS}
&\mathrm{AS}(y_t) = \|y_t - \hat{y}_t\|_2^2 \cdot (\text{ALoRa-T}) \in \mathbb{R} , \\
& \text{ALoRa-T}(y_t;\mS^{(L)}) = \sum_{i=1}^T \1_\mathrm{\{\sigma_i^{(L)} > h_1\}} \in \mathbb{R}, \label{eq:ALoRa_T_Score}
\end{align}
where \( \{\sigma_1^{(L)}, \dots, \sigma_k^{(L)}\} \) are the singular values of the SA-matrix \( \mS^{(L)} \), used during the encoding-decoding of the MTS at time step \( t \). The indicator function \( \1_\mathrm{\{\cdot\}} \) returns 1 if the condition is true and 0 otherwise. The threshold \( h_1 \) is introduced because, while the low-rank regularization encourages many singular values to be close to zero, they are often not exactly zero. Moreover, when the anomaly score exceeds a threshold, \(AS(y_t) > h_2\), an anomaly alarm is triggered. Details about \( h_1 \) and \( h_2 \) selection is provided in Appendix~\ref{training_details_appendix}. The ALoRa-T score captures the temporal characteristics of anomalies, providing earlier and more reliable indications than other scoring functions, as shown in Figure~\ref{anomaly_score_figure} (see also Figs.~\ref{anomaly_score_figure_swat_hai} and \ref{anomaly_smdpsmSecond} in Appendix~\ref{ExpSumpl}).

\subsection{ALoRa-Loc for MTS anomaly localization and intepretations}
\label{MethodLocalization}

Effective anomaly localization in MTS requires a deep understanding of the model's learning process. Practitioners often ask: \textit{How does the model make decisions, and which time series influence each output?} However, due to a limited understanding of the learning dynamics, such questions remain largely unanswered in the context of deep learning. In contrast, linear regression remains popular for its simplicity and interpretability. Each reconstructed value follows \( \widehat{x_i} = \sum_j c_{ij} x_j + b_i \), where \( c_{ij} \) quantifies the influence of input \( x_j \) on output \( \widehat{x_i} \). 

The proposed localization method (ALoRa-Loc) first addresses this gap by deriving weights that capture how each input time series contributes to both the learned latent representation and the reconstruction of each output time series. This is particularly important, since during model inference one can quantify how strongly the model relies on a given time series \( i \), as measured by its relative contribution with magnitude \( C_{ij} / \sum_k C_{ik}, j \in  \{1,\cdots, d\} \). For the anomaly localization task, such interpretability of the model’s decision process is even more critical. As demonstrated in Appendix~\ref{appendixsimulation-analysis}, anomalies in one series can propagate to others during the modeling process, making it essential to capture these propagation effects. The contribution weights \( E_{ij} \) and \( C_{ij} \) provide exactly this capability: \( E_{ij} \) traces the influence of input series \( i \) on the latent features, while \( C_{ij} \) extends this influence to the reconstructed outputs. By leveraging these weights, ALoRa-Loc enables tracing anomalies backward from the reconstructions to the latent space and ultimately to the originating input series, thereby supporting accurate anomaly localization. The derivation of the contribution weights is based on the theoretical analysis in Section~\ref{Section4}, with analytical details in Appendix~\ref{CijandEijAppendix}.

\paragraph{Contribution of each input time series to the latent space:}
From Eq.~(\ref{eq:LatentspaceZ}) and Eq.~(\ref{eq:general-skip}), the latent representation is governed by the product of two matrices. 
The left product captures the covariance contributions, whereas the right product corresponds to shared weights across all time series and therefore does not affect the variability of individual contributions (see Appendix~\ref{CijandEijAppendix} for details).  

Accounting for the embedding module, the final contribution weights from the input space to the latent space are given by:  
\begin{equation}
\label{Econtibutions}
E_{ij} = \sum_{k=1}^{d_{\text{model}}} \left( \sum_{l=-\tfrac{m-1}{2}}^{\tfrac{m-1}{2}} w_{i,l}^{(k)} \right) b_{kj}, 
\quad 
\mB =
\begin{cases}
\prod_{i=1}^{L} \big(\mW^{(V,i)} + I\big), & \text{with skip connections (Eq.~(\ref{eq:general-skip}))}, \\[1em]
\prod_{i=1}^{L} \mW^{(V,i)}, & \text{without skip connections (Eq.~(\ref{eq:LatentspaceZ}))}.
\end{cases}
\end{equation}
Here, $b_{ij}$ denotes the $(i,j)$-th entry of \(\mB \in \mathbb{R}^{d_{\text{model}} \times d_{\text{model}}}\), and $w_{i,l}^{(k)}$ are the embedding filter weights for input series $i$ at lag $l$. 
In the absence of embedding, the weights reduce to $\mE = \mB$. 
\paragraph{Contribution of each input time series to the reconstrucion space:} From Eq.~(\ref{recostrctionEq}), it follows that the overall contribution of the \( i \)-th input time series to the \( j \)-th reconstructed time series is given by:
\begin{equation}
\label{effects_input_out}
    C_{ij} = \sum_{k=1}^{d_{\text{model}}} w^{\text{out}}_{kj} \cdot E_{ik}
\end{equation}

Based on these insights, we define the localization anomaly score (LAS) for each time series \( i \) at time \( t \) as:
\begin{equation}
    LAS^{(i)}_t = \sum_{j=1}^{d} C_{ij} \|y_t^{(j)} - \hat{y}_t^{(j)}\|_2^2
    \label{LocalizaEq}
\end{equation}
The intuition behind this score is that each term \( ( C_{ij} \|y_t^{(j)} - \hat{y}_t^{(j)}\|_2^2 ) \) represents the magnitude of the anomaly in the \( i \)-th time series that has propagated to the reconstruction of the \( j \)-th one. Summing over all \( j \) captures the total influence of the anomaly in time series \( i \) across the system. In practice, it is often more effective to sum only over the top-\(k\) dimensions with the largest \( C_{ij} \), focusing on the most influential components. We refer to this variant as ALoRa-Loc (top-\(k\)).

\section{Experiments}
\label{section6}
\subsection{Datasets, baselines and evaluation metrics} 
\noindent\textbf{Datasets \& baselines:} We evaluated ALoRa on six widely used datasets from diverse domains: SWaT~\citep{Mathur2016SWaT} and HAI~\citep{HAIdata} from industrial control, SMD~\citep{su_robust_2019} and PSM~\citep{abdulaal2021practical} from IT monitoring, and MSL~\citep{hundman2018detecting} from spacecraft telemetry. For localization evaluation, we use SMD, SWaT, and the MSDS dataset~\citep{msdsDataset}. We compare ALoRa against a wide range of baselines, including classical methods (PCA~\citep{PCA}, KNN~\citep{KNN}, IForest~\citep{liu2008isolation}, LOF~\citep{LOF}, OC-SVM~\citep{OCSVM}), notable deep-learning models (Omni-Anomaly~\citep{su_robust_2019}, Interfusion~\citep{li_multivariate_2021}, DAEMON~\citep{DAEMON}, contrastive learning-based methods~\citep{DCdetector}, and recent SOTA approaches such as Anomaly Transformer (A.T)~\citep{xu2022anomaly}, MEMTO~\citep{song2023memto}, NPSR~\citep{lai2023nominality}, \(D^3R\)~\citep{wang2023drift}, and SARAD~\citep{dai2024sarad}. For localizaiton we also compared with AERCA~\citep{han2025root}. All results are from our own runs using official or public code with recommended settings. See Appendix~\ref{Experimental_setting_appendix} for details.

\noindent\textbf{Detection metrics:} Recent studies on MTS anomaly detection evaluation have shown that range-based metrics are the most appropriate, as they address the limitations of point-adjusted and point-wise evaluation methods \citep{liu2024the}. Moreover, since MTS anomaly detection involves highly imbalanced datasets, \(F_1\)-score–based metrics are considered the most reliable. Accordingly, our evaluation focuses on the affiliation-based \(F_1\)-score, precision, and recall \citep{huet2022local}. For fair comparison, we report the best \(F_1\)-scores and the corresponding precision and recall, avoiding method-specific thresholds. Additional results using the range-based \(F_1\)-score \((RF_1)\) \citep{TSsAwarPrecision}, VUS-AUC (VA), and VUS-PR (VPR) \citep{RAUC} are provided in Appendix~\ref{ExpSumpl}. Additional details on the selection of evaluation metrics are provided in Appendix~\ref{EvaluationMetrics}.

\noindent\textbf{Localization metrics:} We evaluate localization performance using standard metrics such as Hit Rate \citep{su_robust_2019} and Normalized Discounted Cumulative Gain (NDCG) \citep{NDCG}. Additionally, we use the Interpretation Score (IPS), which measures how accurately anomalies are localized at the segment level. Further details are provided in Appendix~\ref{EvaluationMetrics}.

\subsection{Results}
All experiments were conducted over five runs, and we report the average performance values. Due to space restrictions, the standard deviations are presented in Appendix~\ref{ExpSumpl}.\\

\noindent\textbf{Detection Results:} As shown in Table~\ref{main_results}, ALoRa-Det outperforms all baselines on four out of five datasets and ranks second on SWaT, it still significantly outperforms most other methods. Compared to the second-best performing models, ALoRa-Det achieves absolute improvements in affiliation-based \(F_1\)-score of 11.5\% on SMD, 7.8\% on PSM, 5.9\% on MSL, and 8.9\% on the HAI dataset. These results highlight the effectiveness and generalizability of our detection approach. Notably, our method is not only highly accurate, but its ALoRa-T score also provides a highly informative anomaly signal (see Figure~\ref{anomaly_score_figure} and Appendix~\ref{ExpSumpl}) that captures the characteristics of anomalies and enables much earlier detection than competing methods in most cases.
\begin{table*}[h]
\caption{
\textbf{Detection performance}. P, R and \(F_1\) denotes Precision, Recall, and \(F_1\)-score respectively. The best \(F_1\)-scores are highlighted in bold, and the second-best are underlined.}
\centering
\scriptsize
\setlength{\tabcolsep}{2pt}
\begin{tabular}{@{}c|ccc|ccc|ccc|ccc|ccc@{}}
\toprule
 & \multicolumn{3}{c}{\textbf{SMD}} & \multicolumn{3}{c}{\textbf{PSM}} & \multicolumn{3}{c}{\textbf{MSL}} & \multicolumn{3}{c}{\textbf{SWaT}} & \multicolumn{3}{c}{\textbf{HAI}}  \\ 
\cmidrule(l){2-16}
 \textbf{Method} & P & R & \(F_1\) & P & R & \(F_1\) & P & R & \(F_1\) & P & R & \(F_1\) & P & R & \(F_1\)  \\
\midrule
KNN & 0.70 & 0.34 & 0.46 & 0.53 & 0.98 & 0.68  & 0.50 & 0.25 & 0.33 & 0.45 & 0.41 & 0.43 & 0.48 & 0.36 & 0.41  \\
PCA & 0.84 & 0.40 & 0.54 & 0.92 & 0.38 & 0.54 & 0.55 & 0.32 & 0.41 & 0.49 & 0.43 & 0.46 & 0.51 & 0.40 & 0.45 \\
LOF & 0.56 & 0.35 & 0.43 & 0.60 & 0.41 & 0.49 & 0.52 & 0.30 & 0.38 & 0.50 & 0.38 & 0.43 & 0.47 & 0.34 & 0.39  \\
OC-SVM & 0.44 & 0.28 & 0.34 & 0.88 & 0.47 & 0.62 & 0.53 & 0.29 & 0.38 & 0.46 & 0.36 & 0.40 & 0.42 & 0.30 & 0.35  \\ 
IsolationForest & 1.00 & 0.09 & 0.17 & 1.00 & 0.03 & 0.06 & 0.62 & 0.08 & 0.14 & 0.50 & 0.11 & 0.18 & 0.68 & 0.07 & 0.13  \\
OmniAnomaly & 0.68 & 0.66 & 0.67 & 0.70 & 0.64 & 0.67 & 0.5 & 0.93 & 0.64 & 0.49 & 0.52 & 0.50 & 0.51 & 0.39 & 0.44  \\
InterFusion & 0.67 & 0.66 & 0.67 & 0.69 & 0.65 & 0.67 & 0.51 & 0.92 & 0.64 & 0.47 & 0.40  & 0.43 & 0.50 & 0.93 & 0.64 \\
A.T. & 0.58 & 0.88 & 0.70 & 0.55 & 0.83 &   0.66 & 0.51 & 0.96 & 0.67 & 0.57 & 0.37 & 0.45 & 0.61 & 0.52 & 0.56  \\
DAEMON & 0.70 & 0.69 & 0.70 & 0.72 & 0.68 & 0.70 & 0.52 & 0.93 & 0.65 & 0.60 & 0.59 & 0.60 & 0.62 & 0.84 & 0.71 \\
DCdetector & 0.54 & 0.94 & 0.69 & 0.53 & 0.82 &   0.67 & 0.51 & 0.9  & 0.65 & 0.58 & 0.76 & 0.66  & 0.72 & 0.86 & 0.78 \\
MEMTO & 0.78 & 0.86 & 0.79 & 0.63 & 0.75 &   0.68 & 0.53 & 0.95 & 0.67 & 0.60 & 0.61 & 0.60  & 0.60 & 0.66 &  0.64  \\
NPSR & 0.77 & 0.98 & 0.87 & 0.67 & 0.89 &  \underline{0.76} & 0.52 & 0.98 & \underline{0.68} & 0.98 & 0.16 &  0.28  & 0.98  & 0.65 & \underline{0.79}  \\
 \(D^3R\) & 0.77 & 0.99 & \underline{0.87} & 0.63 & 0.96 & \underline{0.76} & 0.65 & 0.63& 0.64 & 0.65 & 0.77 &  \textbf{0.71}& 0.74 & 0.87 &  \underline{0.79}  \\
SARAD & 0.88 & 0.67 & 0.78 & 0.73
 & 0.505 &  0.56 & 0.56 & 0.83 & 0.67 &  0.54 &  0.33 & 0.41 & 0.51 & 0.70 & 0.66  \\
\midrule
\textbf{ALoRa-Det} & 0.97 & 0.98 & \textbf{0.97} & 0.82 &  0.83 & \textbf{0.82} & 0.57 & 
  0.98 & \textbf{0.72} & 0.68 & 0.67 & \underline{0.68} & 0.98 & 0.76 & \textbf{0.86}  \\
\bottomrule
\end{tabular}
\label{main_results}
\end{table*}
\begin{figure}[h]
 \centering
 \includegraphics[width=0.86\linewidth]{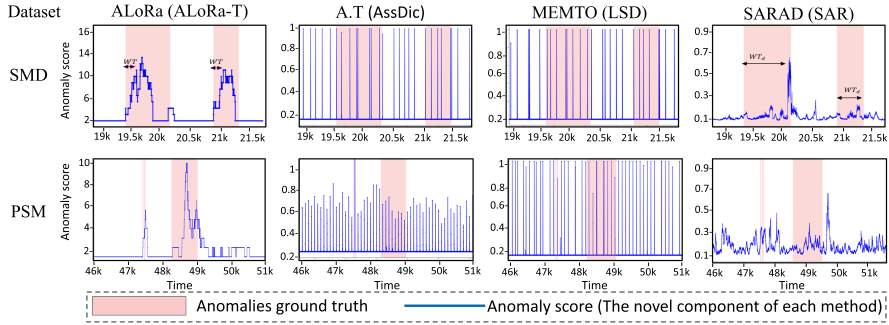}
     \caption{Anomaly scores for the SMD and PSM, datasets. The red segment indicates the ground truth of anomalies.  MEMTO and Anomaly Transformer behave close to random guessing, making detection unreliable. While SARAD provides more informative scores does not fully capture anomaly temporal patterns, often resulting in increased waiting time (WT) until detection. In contrast, ALoRa-T score captures these patterns effectively, enabling faster and more precise anomaly detection. Appendix \ref{ExpSumpl} presents additional visualizations, including more datasets and methods.}
     \label{anomaly_score_figure}
 \end{figure}

\noindent\textbf{Computational Analysis:} To evaluate the practical efficiency of ALoRa-Det, we compare its model size (number of learnable parameters), training time, and inference time per sample with other SOTA methods. Due to page limitations, the detailed results are provided in Appendix~\ref{ComplexityAnalysis_appendix}. As shown in Table~\ref{tab:model_complexity}, the proposed method is computationally efficient and suitable for real-world applications.

\noindent\textbf{Localization Results}: We use ALoRa-Loc (top-2) for SMD and the standard version for the other two datasets. Table~\ref{localization_results} shows that ALoRa-Loc consistently outperforms the compared methods. Moreover, ALoRa-Loc provides meaningful interpretations of the learning process, which are essential for practitioners, as discussed in Section~\ref{MethodLocalization}.
\begin{table}[h!]
\centering
\caption{
\textbf{Localization performance}. We report HR@P\%, NDCG@P\%, and range-based IPS@P\% for $P \in \{100, 150\}$. The best scores are shown in bold, with the second-best underlined. The Interfusion method applies only to segment-based localization, so only the IPS metric is reported.
}
\resizebox{\textwidth}{!}{
\begin{tabular}{
l|
cc|cc|cc|
cc|cc|cc|
cc|cc|cc
}
\toprule
\multirow{3}{*}{\hspace{1cm}\textbf{Method}} &
\multicolumn{6}{c|}{\textbf{SMD}} &
\multicolumn{6}{c|}{\textbf{MSDS}} &
\multicolumn{6}{c}{\textbf{SWaT}} \\
& \multicolumn{2}{c|}{HR@P} & \multicolumn{2}{c|}{NDCG@P} & \multicolumn{2}{c|}{IPS@P}
& \multicolumn{2}{c|}{HR@P} & \multicolumn{2}{c|}{NDCG@P} & \multicolumn{2}{c|}{IPS@P}
& \multicolumn{2}{c|}{HR@P} & \multicolumn{2}{c|}{NDCG@P} & \multicolumn{2}{c}{IPS@P} \\
& 100 & 150 & 100 & 150 & 100 & 150
& 100 & 150 & 100 & 150 & 100 & 150
& 100 & 150 & 100 & 150 & 100 & 150 \\
\midrule
\hspace{0.5cm} MEMTO   &   0.32  &  0.48  &  0.26  &  0.36  &  0.19  &  0.28  &  0.14  &  0.33  &  0.10  &  0.22  &  0.02  &  0.04  &  0.01  &  0.01  &  0.01  &  0.02  &  0.05  &  0.05  \\ 
\hspace{0.5cm} OMNI   &    0.29   &     0.46  &   0.24    &     0.34  &     0.17  &    0.26   &  0.13     &    0.35   &    0.09   &   0.21    &   0.01    &     0.03  &   0.01    &    0.01   &   0.01    &    0.01   &    0.04   &  0.04     \\
\hspace{0.5cm}Interfusion   &    -   &     -  &   -    &     -  &     0.59  &    0.75   &   -    &     -  &      - &   -    &    0.03   &    0.05   &  -     &   -    &    -   &    -   &     0.10  &   0.12    \\
\hspace{0.5cm}SARAD    &    \underline{0.44}  &  \underline{0.56}     &    \underline{0.47}   &     \underline{0.55}  &    \textbf{0.61}   &   \underline{0.74}    &  0.25     &    0.40  &    \underline{0.31}   &  0.39   &  \underline{0.04}    &    \underline{0.06}   & \underline{0.03}      &   \underline{0.03}    &    0.03   &    \underline{0.04}   &   \underline{0.11}    &   \underline{0.12}    \\
\hspace{0.5cm}DAEMON    &   0.26   &   0.39   &    0.31  &   0.40   &   0.24    &   0.26    &    0.23   &   0.36   &     0.27  &   0.34   &     0.02  &      0.03 &    0.02   &   0.02  &   0.02    &  0.03    &     0.06  &  0.07   \\
\hspace{0.5cm}AERCA    &   0.21   &   0.25  &   0.36   &   0.45   &   0.13   &     0.17  &  \textbf{0.33}     &   \underline{0.56}   &    \underline{0.31}   &   \underline{0.40}   &   \textbf{0.08}    &   \textbf{0.12}    &    0.013   &  0.014   &  \underline{0.031}     & \underline{0.04}    &   0.028    &  0.029   \\
\hspace{0.2cm}\textbf{ALoRa-Loc}  & \textbf{0.56}  & \textbf{0.76}  & \textbf{0.60}  & \textbf{0.70}  & \underline{0.60}   & \textbf{0.81}   &     \underline{0.30}    &    \textbf{0.57}    &    \textbf{0.32}     &    \textbf{0.45}    &   0.03    &    0.05     &    \textbf{0.042}  &    \textbf{0.068}    &    \textbf{0.041}   & \textbf{0.056}       &   \textbf{0.16}     &    \textbf{0.20}    \\
\bottomrule
\end{tabular}
}
\label{localization_results}
\end{table}

\section{Ablation studies}
To verify the effectiveness of the individual components of the proposed method, we conduct a series of ablation studies. First, we evaluate the contribution of the proposed ALoRA loss, Eq.~(\ref{eqAloraSl}), and the corresponding ALoRA-T score, Eq.~(\ref{eq:ALoRa_T_Score}). The results of this analysis, presented in Table~\ref{tab:ablationALORA}, demonstrate the effectiveness of both the ALoRA loss and the ALoRA-T score. Additional ablation studies exploring alternative implementations of the ALoRA loss are reported in Table~\ref{ablation_alora_headwise} of Appendix~\ref{sec:ALoRaPlacement}. 

\begin{table*}[h!]
\caption{
\textbf{Ablation Study of ALoRA-Loss and ALoRA-Score Effectiveness:} We report the \(F_1\) score for each case.
}
\centering
\scriptsize

\resizebox{\textwidth}{!}{
\setlength{\tabcolsep}{6pt}
\begin{tabular}{@{}c|c|c|c|c|c|c|c@{}}
\toprule
\textbf{Study Focus} & \textbf{ALoRA-Loss} & \textbf{ALoRA-Score}   & \textbf{SMD} & \textbf{PSM} & \textbf{MSL} & \textbf{SWaT} & \textbf{HAI}  \\ 

\midrule

\multirow{2}{*}{\textbf{Effectiveness of ALoRA-Loss}} 
& \(\checkmark\) (Yes) & \(\checkmark\) (Yes) 
& \textbf{0.97} & \textbf{0.82} & \textbf{0.72} & \textbf{0.68} & \textbf{0.86} \\

& \(\times\) (No) & \(\checkmark\) (Yes) 
& 0.95 & 0.74 & 0.69 & 0.61 & 0.82 \\

\midrule

\textbf{Effectiveness of ALoRA-Score}
& \(\checkmark\) (Yes) & \(\times\) (No)
& 0.944 & 0.69 & 0.69 & 0.55 & 0.67 \\

\bottomrule
\end{tabular}
}
\label{tab:ablationALORA}
\end{table*}

Next, for the Lightweight-MTS embedding module, we evaluate how specific design choices affect both performance and computational efficiency. In particular, we compare the model’s performance when all time series pairs are used versus when only the Top-\(K\) pairs are selected. The results presented in Table~\ref{ablation_lightmts_embed}, illustrate that selecting only the  top-\(K\) pairs, is more effective in terms of both performance and computational efficiency. Additional ablation studies for the Lightweight-MTS embedding module are provided in Appendix~\ref{sec:embedding_efficiencyAp}. Finally, Appendix~\ref{sec:ffnAblationStudy} presents an ablation study supporting our decision to omit feed-forward layers. This choice is justified theoretically, since such layers do not alter the latent representation (Proposition~\ref{prop:star}), and is further confirmed experimentally, as shown in Table~\ref{tab:ablationffn}.

\begin{table*}[h!]
\caption{
\textbf{Ablation Study of LightMTS-Embed Design:} 
 \textbf{Cost} : Computation Cost (Model Size in Millions (M) / Training Time (s) / Inference Time per Sample (ms)). \textbf{Det} : Anomaly detection performance measured using the $F_1$ score. 	
}
\centering
\scriptsize

\resizebox{\textwidth}{!}{
\setlength{\tabcolsep}{6pt}
\begin{tabular}{@{}c|c|c
|cc   
|cc   
|cc   
@{}}
\toprule
 &  &  &
\multicolumn{2}{c}{\textbf{SMD}} &
\multicolumn{2}{c}{\textbf{SWaT}} &
\multicolumn{2}{c}{\textbf{HAI}} \\
\cmidrule(l){4-9}

\textbf{Study Focus} & \textbf{Top-K} & \textbf{Correlation} &
\textbf{Cost} & \textbf{Det.} &
\textbf{Cost} & \textbf{Det.} &
\textbf{Cost} & \textbf{Det.} \\
\midrule

\textbf{Our Model (Baseline)}
& \(\checkmark\) (Top-K) & Spearman
& 3.2M / 0.13 / 75 & 0.97
& 3.2M / 0.12 / 45 & 0.68
& 3.2M / 0.13 / 36 & 0.86 \\
\midrule

\textbf{(A) All Pairs Instead of Top-K}
& \(\times\) (All Pairs) & Spearman
& 5.9M/ 0.19/ 90 & 0.96
& 19.5 / 0.25 / 62 & 0.68  
& 108M/0.27/ 80 & 0.85 \\
\midrule

\textbf{(B) Pearson Instead of Spearman}
& \(\checkmark\) (Top-K) & Pearson
& 3.2M / 0.13 / 75 & 0.948
& 3.2M / 0.12 / 45 & 0.666
& 3.2M / 0.13 / 36 & 0.85 \\
\bottomrule
\end{tabular}
}
\label{ablation_lightmts_embed}
\end{table*}

\section{Conclusion}
We study Transformer encoders on MTS and contribute theoretical insights into their learning behavior. Based on this, we propose ALoRa-Det for anomaly detection, and ALoRa-Loc for localization. Our methods achieve SOTA performance on both tasks. Future work includes extending our theoretical analysis from single-head to multi-head attention, and adapting our methods to settings where concept drift occurs alongside anomalous events.

\newpage


\section*{Acknowledgements}
This work was supported by the European Union’s Horizon Europe research and  innovation programme under grant agreement No 101073307 (MSCA-DN LEMUR), the European Research Council (ERC) under grant agreement No 951424 (ERC-SyG Water-Futures), and the Republic of Cyprus through the Deputy Ministry of Research, Innovation and Digital Policy.

\bibliography{iclr2026_conference}
\bibliographystyle{iclr2026_conference}

\newpage

\appendix
\section*{Appendix to: Low Rank Transformer for Multivariate Time Series Anomaly Detection and Localization}
\section{Training Details}
\label{training_details_appendix}

\paragraph{ALoRa-T architecture design and hyperparameter selection:} The ALoRa-T model consists of three self-attention layers with skip connections. Each layer uses \( H = 8 \) attention heads across all datasets. Training is performed using the ADAM optimizer with a learning rate of \( 10^{-4} \), and early stopping is applied to prevent overfitting. A fixed regularization parameter of \( \lambda_{\text{reg}} = 10 \) is used for all datasets.

\paragraph{Window Size \(T\):} The window size \( T \) is an important hyperparameter that determines how much past information the model can utilize at each time step. To assess its impact, we conduct an ablation study evaluating ALoRa-Det on the validation set with different values of \( T \), as shown in Figure~\ref{fig:window_ablation}.  
\begin{figure}[H]
\centering
\includegraphics[width=0.8\textwidth]{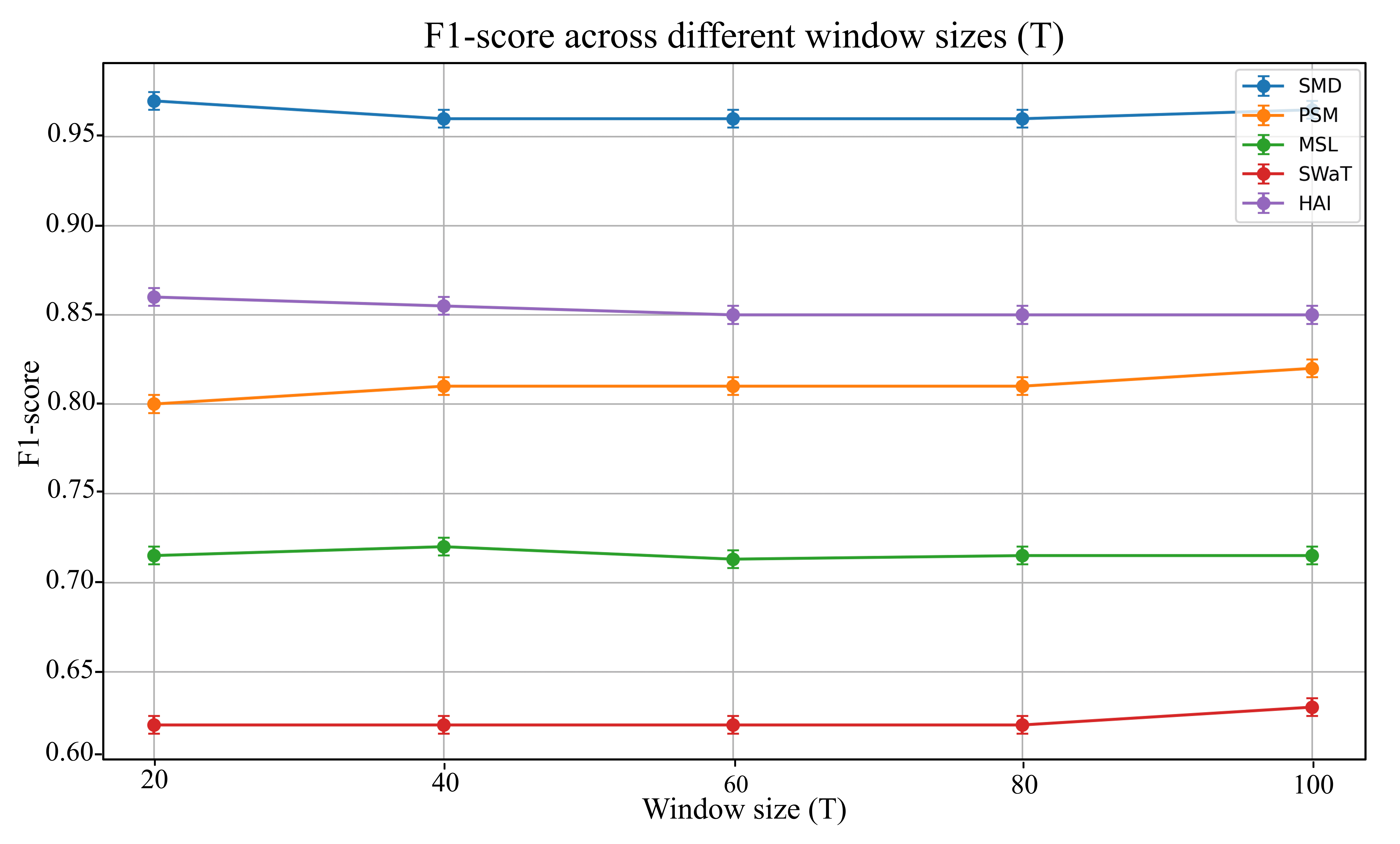}
\caption{Validation F1-scores across different window sizes for each dataset. The average over 5 runs is reported along with the standard deviation.}
\label{fig:window_ablation}
\end{figure}
The figure shows that  ALoRa-Det's performance is not sensitive to the choice of window size. It is worth noting that smaller \( T \) values lead to faster training and inference, while larger values result in longer processing times.
Based on these considerations, we select the following window sizes for each dataset: SMD (20), PSM (100), MSL (20), SWaT (20), and HAI (20). For the MSDS dataset used for localization, the window size is set to 100.

\paragraph{Selection of \( h_1 \), used in eq. (\ref{eq:ALoRa_T_Score}) :} 
We present the process followed for selecting the threshold parameter \( h_1 \). As discussed in Section 5, ALoRa-T incorporates a low-rank regularization term, which encourages the self-attention matrices \( S^{(L)} \) to have a reduced-rank structure. The selection procedure is automated and data-driven, as it must adapt to each dataset. With this procedure, \( h_1 \) is automatically determined for any new dataset based on the spectral properties of the self-attention matrices observed under normal operating conditions. Specifically, we analyze the distribution of the fourth and fifth largest eigenvalues of \( S^{(L)} \) across the training sequence and track their trajectories. The threshold \( h_1 \) is then automatically chosen as the \textbf{maximum value} observed between these two eigenvalue trajectories. This ensures that only the dominant eigenvalues, typically corresponding to a rank of 3–4, are preserved. The choice to preserve the 3-4 largerst eigeve is informed by the findings of \cite{geshkovski2023emergence}. Figure~\ref{fig:h1_selection} illustrates this selection process. Based on this analysis, we set \( h_1 = 10^{-2} \) for SMD, \( 10^{-3} \) for PSM, \( 3 \times 10^{-2} \) for SWaT and MSL, and \( 2 \times 10^{-1} \) for HAI.

\begin{figure}[H]
    \centering
    \includegraphics[width=0.9\textwidth]{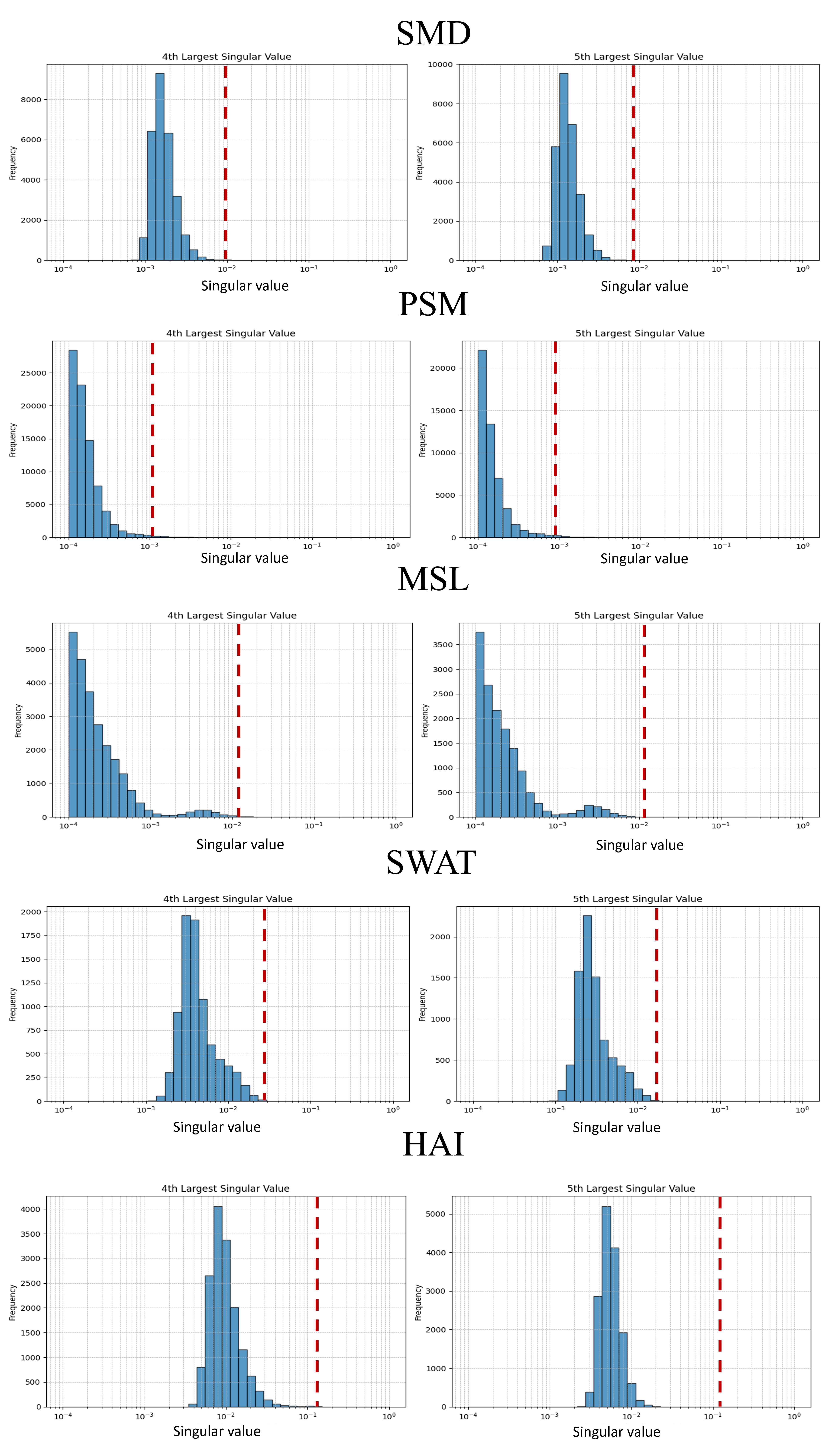} 
    \caption{Distribution of the fourth and fifth largest eigenvalues of \( S^{(L)} \) computed over normal training data. The red dashed line indicates the selected \( h_1 \) value, chosen to zero out smaller eigenvalues and ensure a typical rank no larger than 4.}
    \label{fig:h1_selection}
\end{figure}

\paragraph{Selection of \( h_2 \):} The parameter \( h_2 \) is the threshold used to convert raw anomaly scores into binary decision labels. Such a threshold is required for \textbf{all} anomaly detection methods to provide anomaly predictions. To avoid introducing bias from any particular threshold-selection strategy and to ensure a \textbf{fair comparison across all methods}, we follow a widely adopted evaluation protocol: for each method, including our own, we report the performance obtained using the \textbf{best achievable threshold}, i.e., the threshold that yields the highest performance. This approach is standard in the anomaly detection literature \citep{dai2024sarad,lai2023nominality}. In summary, selecting \( h_2 \) in this manner guarantees \textbf{fairness} when comparing baselines: it evaluates the \textbf{quality of the anomaly scores produced by each method}, rather than the particular threshold-selection mechanism used to convert those scores into labels.


\newpage

\section{Experimental setting}
\label{Experimental_setting_appendix}
\subsection{Datasets}
\label{dataset_appendix}

The datasets used in our experiments are publicly available and can be downloaded from the links provided below:

\begin{itemize}
  \item \textbf{Secure Water Treatment (SWaT)} \cite{li2019madgan}: This dataset captures data from a real-world industrial water treatment system monitored by 51 sensors over 11 days. The first 7 days contain normal operations, while the last 4 include 41 manually injected anomalies.\\
  \textit{Download:} \url{https://itrust.sutd.edu.sg/itrust-labs-home/itrust-labs_swat/}

  \item \textbf{Server Machine Dataset (SMD)} \cite{su_robust_2019}: A large-scale dataset with 38 features collected over five weeks from server machines in a major Internet company, covering various system performance metrics.\\
  \textit{Download:} \url{https://github.com/NetManAIOps/OmniAnomaly/tree/master/ServerMachineDataset}

  \item \textbf{Pooled Server Metrics (PSM)} \cite{abdulaal2021practical}: Collected internally at eBay, this dataset contains 25 dimensions from application server nodes, including 13 weeks of training data and 8 weeks of test data.\\
  \textit{Download:} \url{https://github.com/eBay/RANSynCoders/tree/main/data}

  \item \textbf{Mars Science Laboratory (MSL)} \cite{hundman2018detecting}: Provided by NASA, this dataset includes telemetry data with 55 features recorded from the MSL rover during space missions.\\
  \textit{Download:} \url{https://github.com/khundman/telemanom}
  \item \textbf{HIL-based Augmented ICS Security Dataset (HAI)} \cite{HAIdata}: Collected from a realistic ICS testbed enhanced with a Hardware-In-the-Loop simulator, this dataset contains 78 dimensions from sensors and actuators under normal conditions and during 38 simulated cyber-attacks.\\
  \textit{Download:} \url{https://github.com/icsdataset/hai}
\item \textbf{Multi-Source Distributed System (MSDS)} \cite{msdsDataset}: Collected from a cloud-based OpenStack testbed, this dataset contains multi-source monitoring data—including metrics, logs, and traces—under normal and fault-injected conditions. It includes root cause labels, making it suitable for anomaly localization task.\\
\textit{Download:} \url{https://zenodo.org/records/3484801}
\end{itemize}

Table~\ref{dataset_info} summarizes key statistics of the datasets used in our experiments, including the number of features, training and test samples, and the anomaly proportion in the test set. 

\begin{table}[ht]
\caption{Summary of the five benchmark datasets used in our experiments. `Dim' denotes the number of features. `Train' and `Test' indicate the number of samples in the training and test sets. 'Anomaly rate (\%)' shows the percentage of anomalies in the test set.}
\centering
\begin{tabular}{@{}ccccccc@{}}
\toprule
     & \textbf{Dim} &
     \textbf{Application} &
     \textbf{Train} &    \textbf{Test} & \textbf{\emph{Anomaly rate}(\%)} \\ \midrule
SWaT & 51      & Water    & 495,000  & 449,919  & 0.121     \\
SMD  & 38  & Server  & 28479 & 28479    & 0.156 \\
PSM  & 25   & Server       & 132481   & 87841       & 0.278 \\
MSL  & 55   & Space    & 58317  & 73729       & 0.105 \\
HAI  & 78   & Power    & 921603 & 402005  & 0.223 \\
MSDS  & 10   & AIOps    & 29268 & 29286  &   0.72 \\
\bottomrule
\label{dataset_info}
\end{tabular}
\end{table}

\noindent\textbf{Dataset Preprocessing:}
For each dataset, we first normalize each time series individually before passing the data to the model. This normalization is performed using the mean and standard deviation computed from the training set. The same normalization parameters are then applied to the corresponding test set. For the SWaT and HAI datasets, which contain a very large number of time steps due to high-frequency (per-second) sampling, we downsample the data to one-minute intervals by averaging the values over every 60-second window.

\subsection{Baselines}
\label{Baselines}
We reproduced the results of all baseline methods using their official or publicly available implementations on GitHub. For each baseline, we followed the configuration settings recommended in their respective papers to ensure optimal performance. The baselines and their implementations can be found at the following links:

\begin{itemize}
    \item \textbf{kNN}: \url{https://github.com/yzhao062/pyod}
  \item \textbf{PCA}: \url{https://github.com/yzhao062/pyod}
  \item \textbf{LOF}: \url{https://github.com/yzhao062/pyod}
 \item \textbf{OCSVM}: \url{https://github.com/yzhao062/pyod}
  \item \textbf{IForest}: \url{https://github.com/yzhao062/pyod}
  \item \textbf{OmniAnomaly}: \url{https://github.com/NetManAIOps/OmniAnomaly}
  \item \textbf{InterFusion}: \url{https://github.com/ryu-ichiro/InterFusion}
  \item \textbf{Anomaly Transformer}: \url{https://github.com/thuml/Anomaly-Transformer}
  \item \textbf{DAEMON}: \url{https://github.com/Sherlock-C/DAEMON}
  \item \textbf{DCdetector}: \url{https://github.com/DAMO-DI-ML/KDD2023-DCdetector}
  \item \textbf{MEMTO}: \url{https://github.com/dreamgonfly/memto}
  \item \textbf{NPSR}: \url{https://github.com/lai-chihyu/NPSR}
  \item \textbf{SARAD}: \url{https://github.com/ZhihaoDai/SARAD}
  \item \textbf{D3R}: \url{https://github.com/Wang-Xinyu666/D3R}

\end{itemize}

\subsection{Evaluation Metrics}

\label{EvaluationMetrics}
\noindent\textbf{Detection:} 
The evaluation of multivariate time series anomaly detection has attracted growing research attention, largely due to the challenge of measuring model performance in a manner that aligns with real-world temporal decision-making. Anomalies usually span continuous ranges rather than isolated points. As a result, point-wise evaluation that treats each timestamp independently is inappropriate because it ignores temporal continuity and unfairly penalizes slight delays or near-misses in detection. Traditionally, the point-adjustment method was used to address this limitation, where detecting any point within an anomalous segment suffices to consider the entire segment as an observed anomaly \citep{xu2022anomaly,song2023memto,shen2020timeseries}. However, this approach often inflates performance metrics, as even a random anomaly scoring method can achieve performance comparable to that of a well-informed method, which effectively captures the temporal characteristics of anomalies \citep{kim2021towards,huet2022local}. This limitation of point-adjusted metrics is further illustrated in Figure~\ref{fig:random_vs_informative}.

\begin{figure}[h]
\centering
\includegraphics[width=0.8\textwidth]{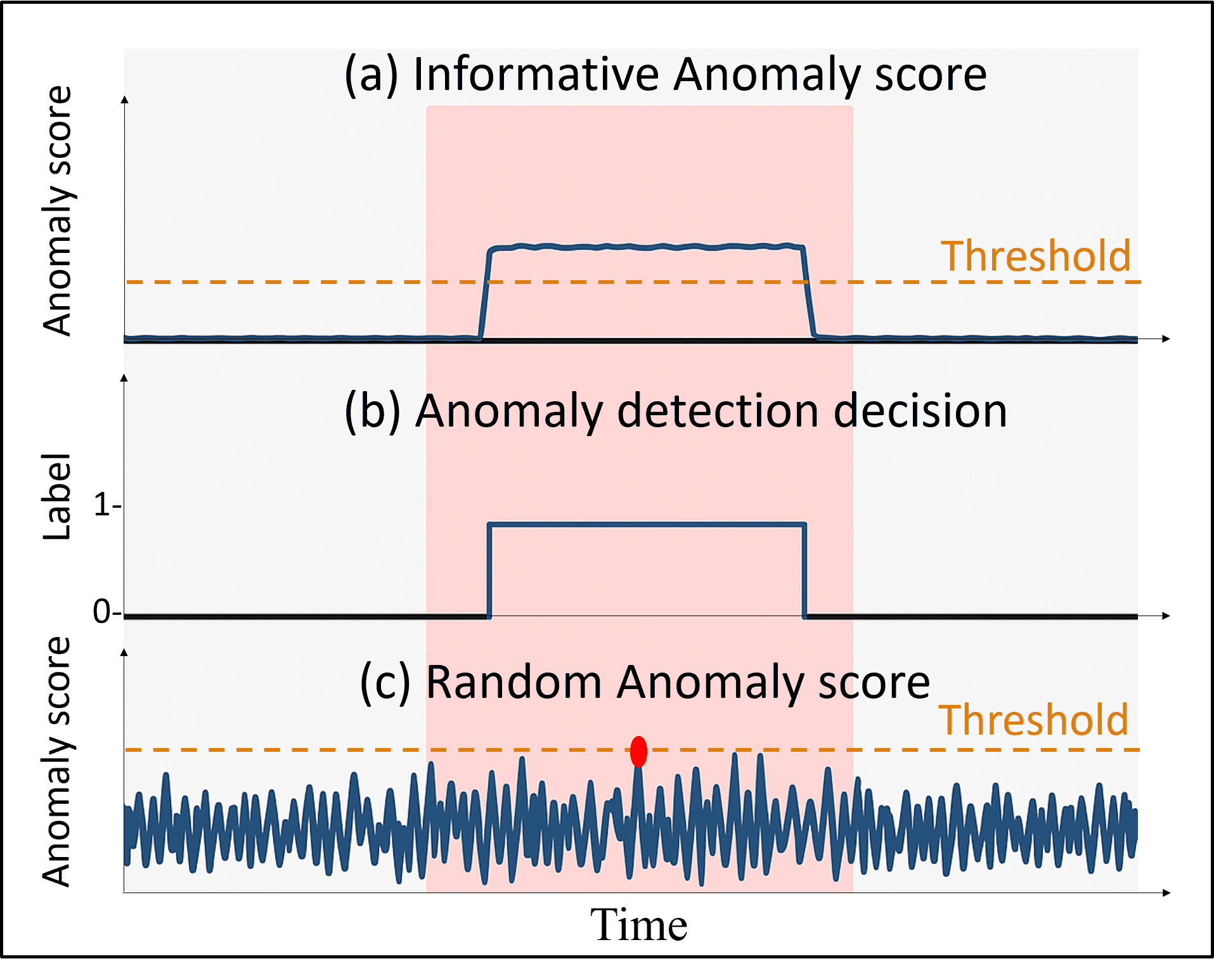}
\caption{Line (a) shows the anomaly scores from an informative anomaly scoring approach, while Line (c) represents a random scoring method. The middle section displays the binary decisions (where anomaly = 1, otherwise 0) made by both methods using the point adjustment technique, which ultimately led to the same outcome. The red segments indicate the ground truth abnormal time steps, while the red dot in Line (c) marks the only time step that, by chance, exceeded the anomaly threshold.}
\label{fig:random_vs_informative}
\end{figure}

Recognizing these limitations, many studies \citep{liu2024the, TSsAwarPrecision} have suggested range-based evaluation metrics as the most appropriate choice for multivariate time series anomaly detection. For instance, \citet{TSsAwarPrecision} proposed the range-based \(F_1\), precision, and recall (\(RF_1\), R-P, R-R). An advancement over these are the affiliation-based F1, precision, and recall \citep{huet2022local}, which demonstrate superior performance and robustness compared to the aforementioned metrics. Unlike traditional metrics, this approach accounts for the temporal alignment between predicted and actual anomalies. Precision is computed as the average directed distance from predicted anomaly events to their nearest ground truth counterparts, while recall measures the average directed distance from ground truth events to their closest predictions. For a more detailed explanation of how these metrics are computed, we refer the reader to the original paper \citep{huet2022local}. \citet{RAUC} introduced VUS-ROC and VUS-PR as range-aware alternatives to traditional AUC. However, because MTS anomaly detection is a highly imbalanced classification problem, \(F_1\)-like metrics are the most appropriate and reliable for evaluation, while VUS metrics can provide additional insights but should be used together with F1-like metrics. Given the superior robustness and effectiveness of affiliation-based metrics, we adopt the affiliation-based \(F_1\)-score as our primary evaluation metric. Results for the range-based \(F_1\)-score (\(RF_1\)), along with the VUS-ROC \((V_{ROC})\) and VUS-PR \((V_{PR})\) metrics, are reported in Table~\ref{range_auc_results}. Nevertheless, due to the highly imbalanced nature of MTS anomaly localization, F1-like metrics remain the most appropriate choice, as they explicitly capture the trade-off between precision and recall.



\noindent\textbf{Localization:}
At each time step \( t \), anomaly localization is evaluated by identifying which features (time series) contribute to the detected anomaly. The ground-truth for localization is provided as a binary label vector \( G_i \in \{0,1\}^N \), where \( G_i = 1 \) indicates that the \( i \)-th feature is anomalous at time \( t \), and \( G_i = 0 \) otherwise.

After computing the anomaly score \( AS_t^{(i)} \) for each feature using eq.(~\ref{LocalizaEq}), features are ranked in descending order of their scores to identify those most likely responsible for the anomaly. The top-\( k \) features are selected as:

\begin{equation*} \label{TopKFeatures}
\Gamma_{t@P\%} = \text{Top-}k \text{ features ranked by } AS_t^{(i)}, \quad \text{where } k = \lceil |G_i| \times P\% \rceil
\end{equation*}

For example, if there are 3 anomalous features (i.e., \( |G_i| = 3 \)) and we evaluate at \( P = 150 \), then \( k = 5 \). The localization performance is then quantified using the \textit{Hit Rate (HR)} at \( P\% \), which measures the fraction of truly anomalous features that appear in the top-\( k \) predictions:

\begin{equation}
\text{HR}_{t@P\%} = \frac{|G_i \cap \Gamma_{t@P\%}|}{|G_i|}
\end{equation}

\noindent\textbf{Normalized Discounted Cumulative Gain (NDCG):} Let \( r_j \in \{0,1\} \) indicate whether the \( j \)-th ranked feature in \( \Gamma_{t@P\%} \) is truly anomalous (i.e., belongs to \( G_i \)). The \textit{Discounted Cumulative Gain (DCG)} and its ideal counterpart (IDCG) are defined as:

\begin{align*}
\text{DCG}_{t@P\%} &= \sum_{j=1}^{k} \frac{r_j}{\log_2(j + 1)} \\
\text{IDCG}_t &= \sum_{j=1}^{|G_i|} \frac{1}{\log_2(j + 1)}
\end{align*}

The resulting \textit{NDCG} is:

\begin{equation}
\text{NDCG}_{t@P\%} = \frac{\text{DCG}_{t@P\%}}{\text{IDCG}_t},
\end{equation}

NDCG ranges from 0 to 1, with higher values indicating better-ranked localization.\\

\noindent\textbf{Interpretation Score (IPS):} To evaluate anomaly localization at the segment level, we use the \textit{Interpretation Score (IPS)} \cite{li_multivariate_2021}. For each anomaly segment \( S_i \), we first compute a single score for each feature by taking the maximum anomaly score within the segment:

\begin{equation}
AS_{S_i}^{(j)} = \max_{t \in S_i} AS_t^{(j)}
\end{equation}

We then identify the top-ranked features as the predicted anomalous dimensions \( P_{S_i} \), and compare them against the ground-truth anomalous features \( G_{S_i} \). The IPS measures the average proportion of correctly predicted features across all segments:

\begin{equation}
\text{IPS} = \frac{1}{N} \sum_{i=1}^{N} \frac{|G_{S_i} \cap P_{S_i}|}{|G_{S_i}|},
\end{equation}

where \( N \) is the total number of anomalous segments. Each segment is equally weighted in the final score.
\label{appendix:stddev}

\section{Computational analysis}
\label{ComplexityAnalysis_appendix}
To evaluate the practical efficiency of \textbf{ALoRa-Det}, we compare its model size (number of learnable parameters), total training time (in seconds), and inference time per sample (in milliseconds) against three state-of-the-art baselines: MEMTO, SARAD, and \(D^3R\). The results across three benchmark datasets are reported in Table~\ref{tab:model_complexity}.

\begin{table}[H]
\centering
\caption{Computational efficiency comparison. The table reports the number of learnable parameters (in millions), total training time (s), and inference time per sample (ms).}
\label{tab:model_complexity}
\begin{tabular}{lccc}
\toprule
\multirow{2}{*}{\textbf{Method}} 
 & \textbf{SMD (d=38)} & \textbf{SWaT (d=51)} & \textbf{HAI (d=78)} \\
 & Params / Inf/ms / Train(s) & Params / Inf/ms / Train(s) & Params / Inf/ms / Train(s) \\
\midrule
MEMTO                		&  \underline{5.9M} / 1.10 / \underline{108} 			& \underline{5.9M} / 5.71 / \underline{57}  		&  \underline{6M}/ 6.8 / \underline{96}  \\
SARAD                     	&9.6M / \textbf{0.11} / 147  					& 9.6 / \underline{0.16} / 652				&  9.6M / \underline{0.48}  / 126 \\
D3R                       	& 52.3M / 4.83 / 994 						& 52.3 / 6.27 / 283 					&  52.5 M/ 21.6 /1077  \\
ALoRa-Det               	&  \textbf{3.2 M}/ \underline{0.13} / \textbf{75}  		& \textbf{3.2M} / \textbf{0.12} / \textbf{45}		&  \textbf{3.2M}  / \textbf{0.13} / \textbf{36} \\
\bottomrule
\end{tabular}
\end{table}

The results in Table~\ref{tab:model_complexity} indicate that \textbf{ALoRa-Det} almost always outperforms the competing methods across all three metrics. Unlike other approaches, the total number of parameters remains constant as input dimensionality increases, while both inference time per sample and training time remain consistently low. These advantages become more evident as the dimensionality of the MTS grows, highlighting the scalability of the proposed method to high-dimensional settings. Taken together, these findings confirm that \textbf{ALoRa-Det} is not only more efficient than existing baselines but also highly suitable for real-time applications.

\section{Ablation studies - supplement}
\subsection{Lightweight MTS Embedding Parameter Efficiency}
\label{sec:embedding_efficiencyAp}

Table~\ref{tab:embedding_efficiency} shows the parameter efficiency of the lightweight embedding used in our method, comparing it to standard fully dense filters.
\begin{table}[H]
\centering
\caption{Parameter comparison between standard fully dense filters wiht model dimenion,\(d_{\text{model}} =512\) and the proposed Lightweight MTS Embedding module.}
\label{tab:embedding_efficiency}
\begin{tabular}{@{}ccc@{}}
\toprule
\textbf{Input Size} $d$ & \textbf{Standard Weights} & \textbf{Custom Weights} \\ \midrule
10   & 15{,}360   & 90   \\
20   & 30{,}720   & 380  \\
40   & 61{,}440   & 1{,}024 \\
50   & 76{,}800   & 1{,}024 \\
100  & 153{,}600  & 1{,}024 \\
200  & 307{,}200  & 1{,}024 \\ 
\bottomrule
\end{tabular}
\end{table}
In addition, we compare the standard and Lightweight-MTS embedding methods in terms of detection performance to demonstrate that the proposed lightweight approach does not compromise effectiveness. On the contrary, it maintains, and in some cases even improves performance. Figure~\ref{fig:embedding_comparison} presents the \(F_1\) scores achieved by both embeddings, along with boxplots showing the distribution of reconstruction errors on normal test samples, providing further insight into the reconstruction behavior of each method.
 \begin{figure}[H]
 \centering
 \begin{subfigure}[t]{0.32\textwidth}
     \centering
     \includegraphics[width=\textwidth, height=4cm]{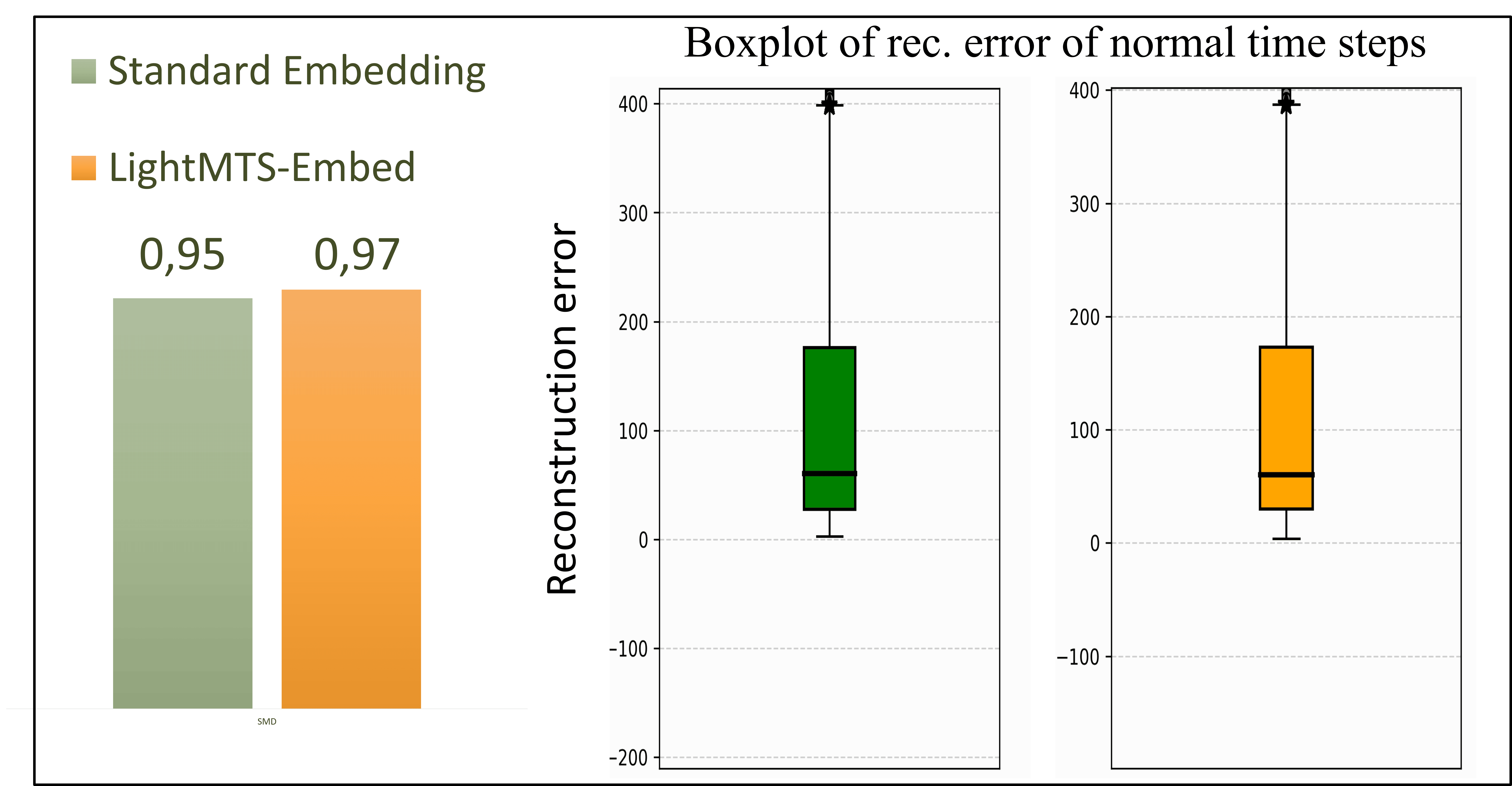}
     \caption{SMD}
 \end{subfigure}
 \hfill
 \begin{subfigure}[t]{0.32\textwidth}
     \centering
     \includegraphics[width=\textwidth, height=4cm]{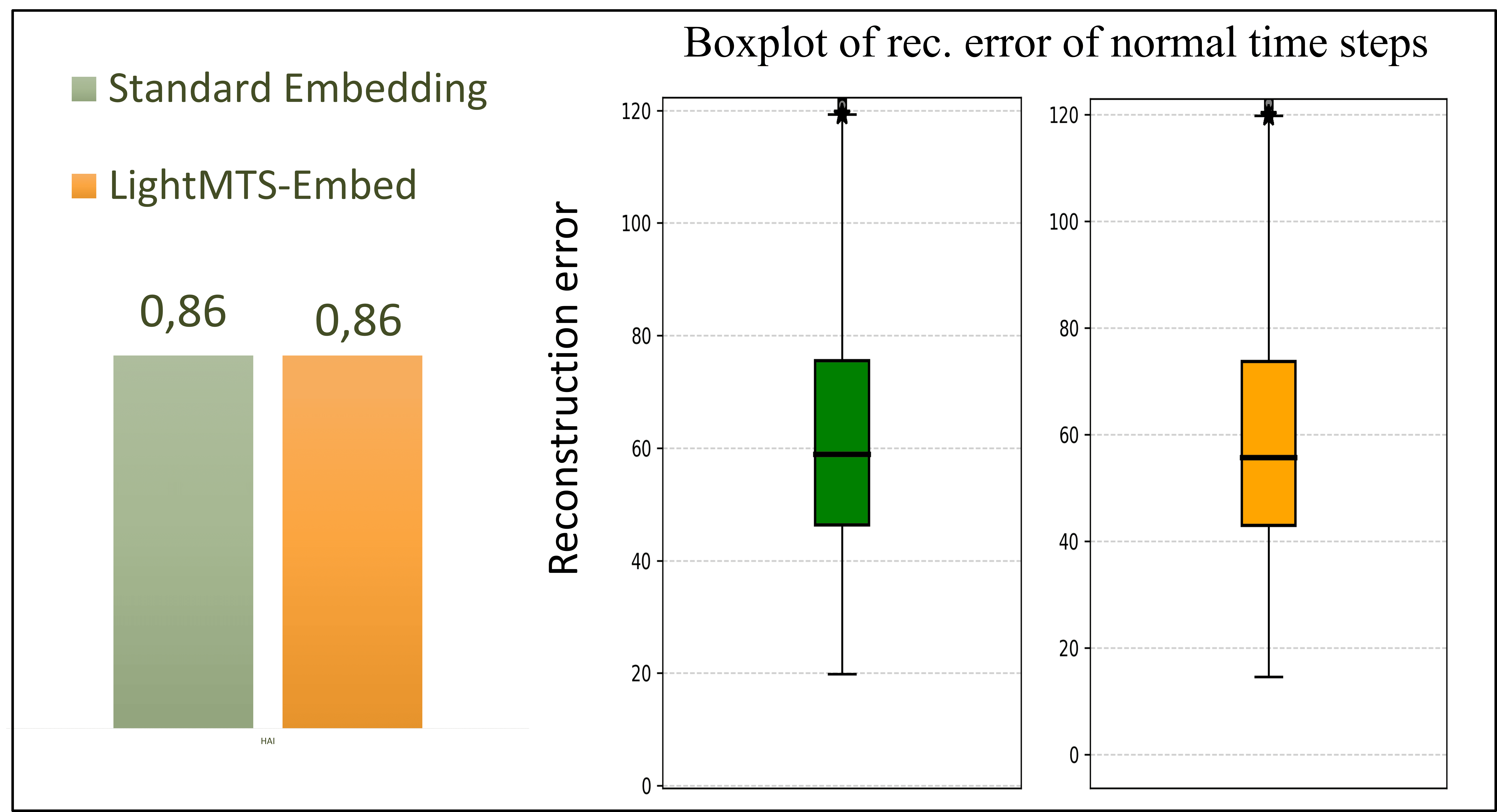}
     \caption{HAI}
 \end{subfigure}
 \hfill
 \begin{subfigure}[t]{0.32\textwidth}
     \centering
     \includegraphics[width=\textwidth, height=4cm]{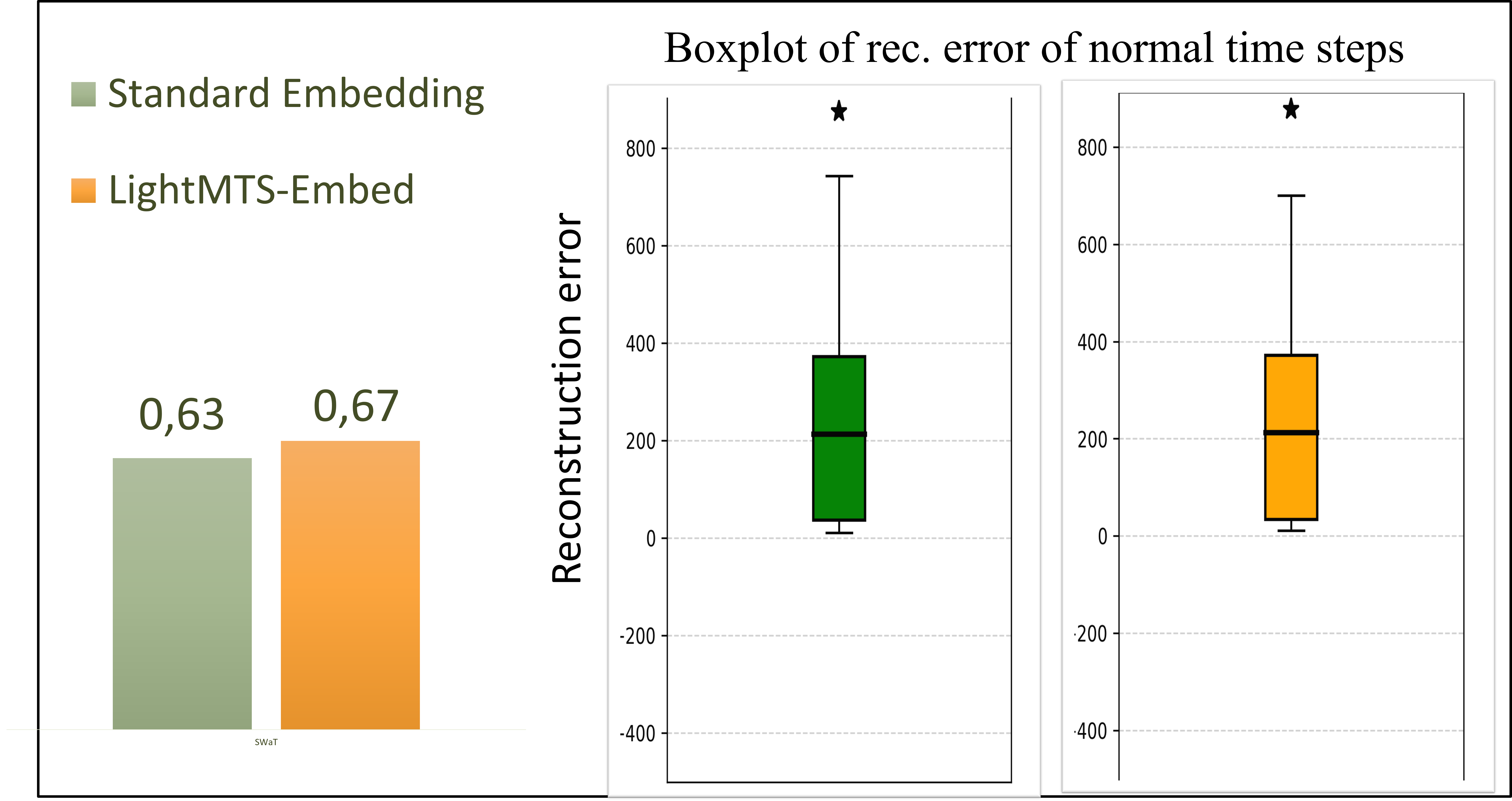}
     \caption{SWaT}
 \end{subfigure}
\caption{Model performance under the standard and lightweight embeddings for the SMD, HAI, and SWaT datasets. In each subplot, the figure on the left shows the \(F_1\) score comparison, and the figure on the right presents the boxplot of reconstruction errors on normal test samples.}
\label{fig:embedding_comparison}
\end{figure}

\paragraph{Why LightMTS embedding module use pairwise interactions:}
The proposed LightMTS embedding module was designed to reduce complexity by restricting interactions to \textbf{pairwise} relationships between time series. Increasing the number of time series involved in each interaction, from pairs to triplets or higher-order groups, would lead to a \textbf{combinatorial explosion} in the number of possible interactions. In such cases, the complexity would become comparable to that of standard fully connected or convolutional embeddings. 

\paragraph{Top-$K$ Selection for the LightMTS-Embed Module:} The choice to select the top-512 pairs follows standard design practice in Transformer architectures, where the embedding dimension is commonly set to 512, as introduced in the original Transformer model \citep{AttentioIsAll}. This setting provides a strong balance between representational capacity and computational efficiency, enabling consistently low training and inference times across all datasets (Table~\ref{tab:model_complexity}). To assess the sensitivity of the model to this parameter, an ablation study for different values of \(K\), is presented in Table~\ref{tableK}. The results indicate that the model is not highly sensitive to this hyperparameter, as performance remains similar for \(K = 512\) and \(K = 256\), with the latter offering slightly lower accuracy but improved computational efficiency. However, as the model’s computational efficiency is already very strong, as demonstrated in Table~\ref{tab:model_complexity}, \(K = 512\) remains a robust default choice. It generalizes well across datasets from diverse domains and provides sufficient representation-learning capacity even for multivariate time-series datasets with very high dimensionality.

\begin{table}[h!]
\caption{
\textbf{Ablation Study on the Top-$K$ pairs selection:}The table repprts the  \(F_1\) score average over 5 runs with the ocrpsoin standar deviatns for differtn alues of the parmaetr K.}
\label{tableK}
\centering
\scriptsize
\resizebox{\textwidth}{!}{
\setlength{\tabcolsep}{6pt}
\begin{tabular}{@{}c
|c   
|c   
|c   
|c   
|c   
@{}}
\toprule
\textbf{K} &
\textbf{SMD} &
\textbf{PSM} &
\textbf{MSL} &
\textbf{SWaT} &
\textbf{HAI} \\
\midrule

\textbf{512} 
& $0.97 \pm 0.012$
& $0.82 \pm 0.014$
& $0.72 \pm 0.011$
& $0.68 \pm 0.010$
& $0.86 \pm 0.015$ \\

\textbf{256} 
& $0.965 \pm 0.014$
& $0.82 \pm 0.014$
& $0.72 \pm 0.011$
& $0.67 \pm 0.011$
& $0.854 \pm 0.015$ \\

\textbf{128} 
& $0.95 \pm 0.012$
& $0.805 \pm 0.015$
& $0.713 \pm 0.012$
& $0.66 \pm 0.012$
& $0.85 \pm 0.013$ \\

\textbf{64} 
& $0.94 \pm 0.014$
& $0.78 \pm 0.015$
& $0.67 \pm 0.012$
& $0.65 \pm 0.013$
& $0.84 \pm 0.012$ \\

\bottomrule
\end{tabular}
}
\end{table}

\subsection{Ablation study for removing Feed-Forward layers}
\label{sec:ffnAblationStudy}

This ablation study examines how omiting feed-forward layers affects both computational efficiency and anomaly detection and localization performance. As stated in Proposition~\ref{prop:star}, feed-forward layers do not alter the latent representation, indicating that their contribution to performance is minimal.The experimental results in Table~\ref{tab:ablationffn} confirm this theoretical insight. Specifically, when feed-forward layers are included, performance either decreases or remains unchanged, while computational cost increases substantially.

\begin{table}[h!]
\caption{
\textbf{Ablation Study for Removing Feed-Forward Layers}: The table reports the number of learnable parameters in millions (Cost), the anomaly detection performance using the \(F_1\)-score (Det.), and the localization performance using the Hit-Rate@100 metric (Loc.), reported only for the datasets where localization labels are available.
}
\label{tab:ablationffn}

\centering
\scriptsize
\resizebox{\textwidth}{!}{
\setlength{\tabcolsep}{4pt}
\begin{tabular}{@{}c
|ccc   
|cc    
|cc    
|ccc   
|cc    
@{}}
\toprule
 &
\multicolumn{3}{c}{\textbf{SMD}} &
\multicolumn{2}{c}{\textbf{PSM}} &
\multicolumn{2}{c}{\textbf{MSL}} &
\multicolumn{3}{c}{\textbf{SWaT}} &
\multicolumn{2}{c}{\textbf{HAI}} \\
\cmidrule(l){2-13}

\textbf{FFN} &
\textbf{Cost} & \textbf{Det.} & \textbf{Loc.} &
\textbf{Cost} & \textbf{Det.} &
\textbf{Cost} & \textbf{Det.} &
\textbf{Cost} & \textbf{Det.} & \textbf{Loc.} &
\textbf{Cost} & \textbf{Det.} \\
\midrule

\(\checkmark\) (Yes)
& 6.3 & 0.96 & \textbf{0.56}
& 2.2 & 0.79 
& 6.3 &  0.72
& 6.3 & 0.67 &  0.03
& 6.3 &  0.854 \\

\(\times\) (No)
& \textbf{3.2} & \textbf{0.97} & 0.50
  & \textbf{1.1} & \textbf{0.82} 
& \textbf{3.2} & 0.72 
& \textbf{3.2} & 0.68 & \textbf{0.042}
  & \textbf{3.2} & 0.86 \\
\bottomrule
\end{tabular}
}
\end{table}

\subsection{ALoRa-Loss Placement}
\label{sec:ALoRaPlacement}
The proposed ALoRa loss (Eq.~\ref{eqAloraSl}) is applied to the average self-attention matrix across heads rather than to each head individually. To justify this choice, we provide an ablation study comparing the two approaches in terms of both computational cost and performance.

\begin{table}[h!]
\caption{
\textbf{Ablation Study on ALoRA-Loss Placement:} Head-Wise vs Averaged Self-Attention. 
Reported are the training time (in seconds) required for each case and the corresponding detection performance (F1-score).}
\centering
\scriptsize

\resizebox{\textwidth}{!}{
\setlength{\tabcolsep}{6pt}
\begin{tabular}{@{}c
|cc   
|cc   
|cc   
@{}}
\toprule
 &
\multicolumn{2}{c}{\textbf{SMD}} &
\multicolumn{2}{c}{\textbf{SWaT}} &
\multicolumn{2}{c}{\textbf{HAI}} \\
\cmidrule(l){2-7}

\textbf{ALoRA-Loss Placement} &
\textbf{Training time } & \textbf{\(F_1\)} &
\textbf{Training time} & \textbf{\(F_1\)} &
\textbf{Training time} & \textbf{\(F_1\)} \\
\midrule

\textbf{Head-Wise ALoRA-Loss}
& 155 & 0.97
  & 98 & 0.67
  & 78 & 0.86
  \\

\textbf{OURS: Averaged Attention ALoRA-Loss}
& \textbf{75} &  0.97
  & \textbf{0.45} & \textbf{0.68}
  & \textbf{0.36} & 0.86
  \\

\bottomrule
\end{tabular}
}
\label{ablation_alora_headwise}
\end{table}
As shown in Table~\ref{ablation_alora_headwise}, both variants achieve nearly identical detection performance across all datasets. However, applying the ALoRa loss to the \emph{average} self-attention matrix is slightly more computationally efficient, which is why it was preferred. Importantly, both approaches are capable of achieving our goal of reducing the rank while maintaining the effectiveness of the model. As a final note, even if the head-wise application of the ALoRa loss had been preferred, the proposed method would still remain more computationally efficient than other state-of-the-art methods.

\newpage

\section{Anomalous effect attribution via effect weights}
\label{appendixsimulation-analysis}

To examine how anomalies in one time series affect others through shared latent representations, we design a controlled simulation using a simple self-attention model. This setup also demonstrates how the \textit{effect weights} introduced in Section 4 quantify influence in both latent and reconstruction spaces.

We generate bivariate i.i.d. data \( X \in \mathbb{R}^{T \times 2} \), where:
\[
x_t^{(1)} \sim 
\begin{cases}
\mathcal{N}(\mu_1 + \Delta, \sigma_1^2), & \text{for } t_1 \leq t < t_2 \\
\mathcal{N}(\mu_1, \sigma_1^2), & \text{otherwise}
\end{cases}, \quad
x_t^{(2)} \sim \mathcal{N}(\mu_2, \sigma_2^2) \quad \forall t
\]

We set \( T = 500 \), \( t_1 = 200 \), and \( t_2 = 300 \). The model consists of a 2-layer, 1-head self-attention mechanism. We fix the value matrices \( W_V^{(l)} \) for layers \( l = 1, 2 \), as well as the output projection matrix \( W_{\text{out}} \), while the remaining parameters are kept learnable:
\[
W_V^{(1)} = I_2, \quad 
W_V^{(2)} = \begin{bmatrix} 0.2 & 0.7 \\ 0.8 & 0.3 \end{bmatrix}, \quad 
W_{\text{out}} = \begin{bmatrix} 0.1 & 0.9 \\ 0.9 & 0.1 \end{bmatrix}
\]

This controlled setup allows us to observe how a localized anomaly in one input series propagates through the latent space and affects the reconstructed time series. As shown in Fig.~\ref{fig:latent-effect-weights}, the effect weights \( E_{1j} \) and \( C_{1j} \) reflect how the anomaly in the first input series propagates through the latent space and influences the reconstructed time series, respectively. Together, they provide meaningful and interpretable attributions of anomaly influence across the model’s internal and output representations.
\begin{figure}[h]
    \centering
    \includegraphics[width=0.9\linewidth]{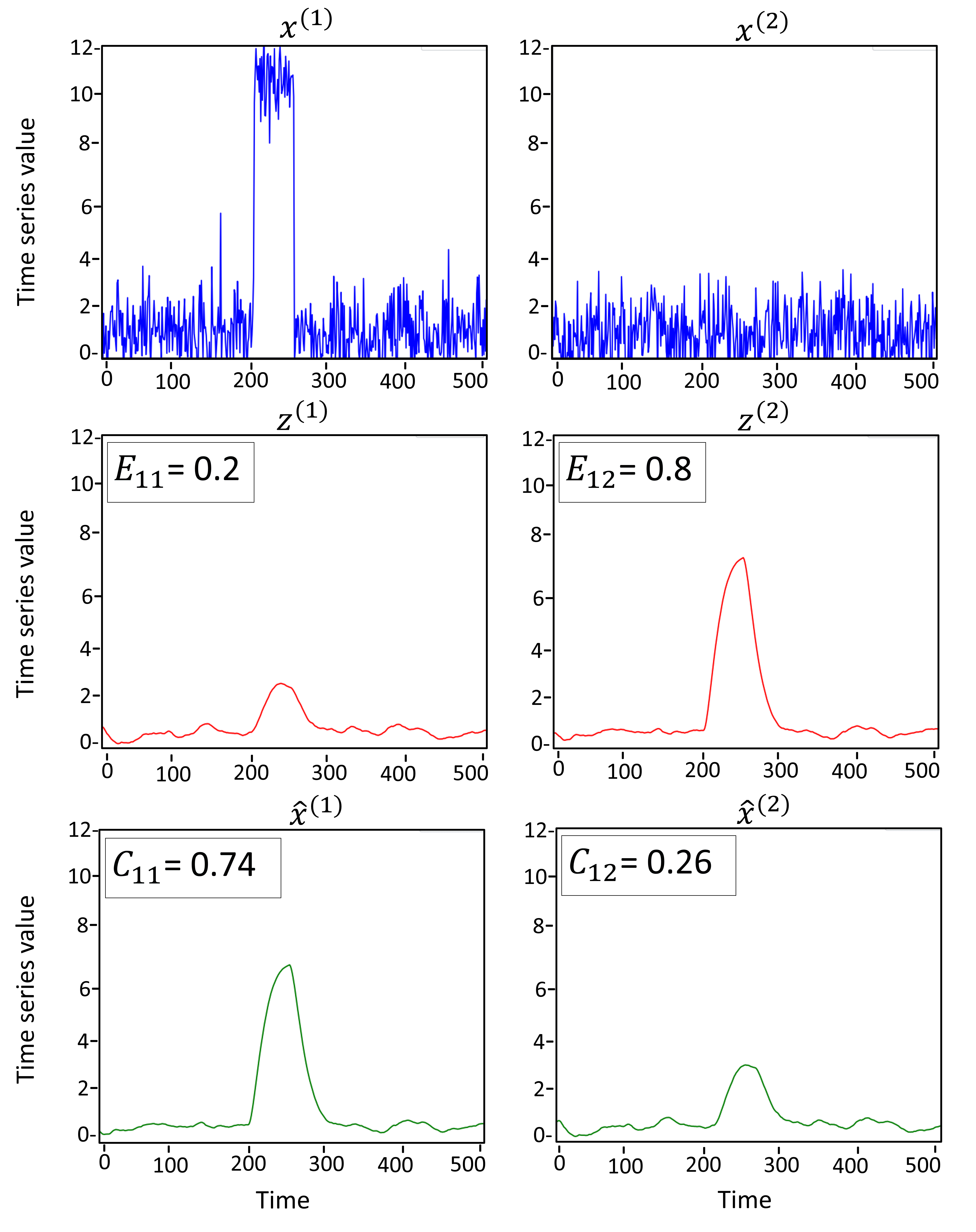}
    \caption{
        Averaged results over 100 simulation runs.  
        \textbf{Top:} Input time series with an anomaly introduced in series 1 between time steps 200 and 300.  
        \textbf{Middle:} Latent representations; effect weights \( E_{1j} \) quantify the contribution of the anomaly in series 1 to each latent dimension.  
        \textbf{Bottom:} Reconstructed time series; effect weights \( C_{1j} \) reflect how the anomaly influences the reconstruction of each output series.
    }
    \label{fig:latent-effect-weights}
\end{figure}

\newpage
\newpage

\section{Experiments - Supplement}
\label{ExpSumpl}
To assess the performance of the proposed \textsc{ALoRa-Det} method, we provide additional results using three evaluation metrics. The selection of these metrics is explained in Appendix~\ref{EvaluationMetrics}. Table~\ref{range_auc_results} presents the performance of the proposed method and all compared methods using the range-based F1-score ($R_{F1}$), as well as the VUS-ROC ($V_{\mathrm{ROC}}$) and VUS-PR ($V_{\mathrm{PR}}$).

\begin{table*}[h!]
\caption{
\textbf{Range-aware evaluation metrics.} \(RF_1\)  denotes Range-based F1-score; \(V_{\text{ROC}}\) and \(V_{\text{PR}}\) denote the Volume Under the Surface ROC and precision, respectively. Best scores are highlighted in bold, and second-best are underlined.
}
\centering
\scriptsize
\setlength{\tabcolsep}{4pt}
\begin{tabular}{@{}c|ccc|ccc|ccc|ccc|ccc@{}}
\toprule
 & \multicolumn{3}{c}{\textbf{SMD}} & \multicolumn{3}{c}{\textbf{PSM}} & \multicolumn{3}{c}{\textbf{MSL}} & \multicolumn{3}{c}{\textbf{SWaT}} & \multicolumn{3}{c}{\textbf{HAI}}  \\ 
\cmidrule(l){2-16}
\textbf{Method} & \(RF_1\) & \(V_{\text{ROC}}\) & \(V_{\text{PR}}\) & \(RF_1\) & \(V_{\text{ROC}}\) & \(V_{\text{PR}}\) &  \(RF_1\) & \(V_{\text{ROC}}\) & \(V_{\text{PR}}\) &  \(RF_1\) & \(V_{\text{ROC}}\) & \(V_{\text{PR}}\) &  \(RF_1\) & \(V_{\text{ROC}}\) & \(V_{\text{PR}}\) \\
\midrule
KNN & 0.15 & 0.42 & 0.09 & 0.31 & 0.40 & 0.28 & 0.11 & 0.37 & 0.08 & 0.10 & 0.44 & 0.35 & 0.12 & 0.09 & 0.12 \\
PCA & 0.18 & 0.65 & 0.11 & 0.48 & 0.58 & 0.42 & 0.15 & \textbf{0.62} & \textbf{0.20} &  0.13 & 0.68 & 0.51 & 0.21 & 0.63  & 0.13 \\
IsolationForest &  0.19 & 0.68  & 0.1 & 0.23 &  0.54 & 0.33 & 0.14  & 0.57  & 0.17 & 0.13 & 0.42 & 0.12 & 0.17  & 0.69 & 0.12 \\
OC-SVM & 0.11  & 0.58 & 0.08 & 0.19  & 0.53  & 0.37 & 0.13 & 0.59 &  \underline{0.18} & 0.03 & 0.62  & 0.52 & 0.23 & 0.64 & 0.14  \\

OmniAnomaly & 0.17 & 0.33 & 0.07 & 0.14 & 0.29 & 0.05 &  0.12 & 0.50 & 0.11 & 0.021 & 0.37
 & 0.12 & 0.20 & 0.73
  & 0.22
  \\
InterFusion & 0.18 & 0.36 & 0.08 & 0.19 & 0.34 & 0.12 &  0.14 & 0.53 & 0.12 & 0.09 & 0.40 & 0.14 & 0.28 & 0.75 & 0.23 \\
A.T. & 0.11 & 0.50 & 0.10 & 0.19  & 0.26 & 0.37 & 0.14 & 0.51 & 0.12 & 0.13 & 0.61 & 0.22 & 0.09 & 0.59 & 0.21
 \\
MEMTO & 0.26 & 0.52
 & 0.11
 & 0.29 & 0.50
 & 0.29
 & 0.13 & 0.50
 &  0.12
 & 0.14 & 0.68
 & 0.39
 & 0.16 & 0.60
 & 0.22
  \\
NPSR & \underline{0.43} & 0.89 & 0.48 & \textbf{0.49} & \underline{0.63} & \underline{0.47} & \underline{0.20} & 0.53 & 0.13 & \textbf{0.33} & \textbf{0.83} & \textbf{0.64} & \underline{0.20} & 0.75 & 0.44 \\
\(D^3R\) & 0.1 & \underline{0.91} & \underline{0.57} & 0.2 & 0.65 &  0.46 & 0.18 & \underline{0.61} & 0.15 & 0.09 & 0.84 &
 0.38  & 0.1 & \underline{0.89} & \underline{0.61} \\
SARAD & \underline{0.34} & 0.80 & 0.16 & 0.25 & 0.62 & 0.42 & 0.12 & 0.53 & 0.13 & 0.171 & 0.59 & 0.33 & 0.07 & 0.70 & 0.21 \\
\midrule
\textbf{ALoRa-Det} & \textbf{0.50} & \textbf{0.93}
 & \textbf{0.58} & \underline{0.45} & \textbf{0.69} & \textbf{0.50} & 0.28 & 0.58 &0.15 
 & \underline{0.23} & \underline{0.78} & \underline{0.57}
 & 
 \textbf{0.62}  &  \textbf{0.92}
 & \textbf{0.62}
 \\
\bottomrule
\end{tabular}
\label{range_auc_results}
\end{table*}
\paragraph{Effectiveness of the ALoRA-T Score Across Anomaly Types:}
To assess the effectiveness of the ALoRA-T score, and to further provide empirical evidence that the attention rank increases in the presence of anomalies, we examine how the rank of the metric—and thus the behavior of the ALoRA-T score—changes under different anomaly types. These anomaly types are the ones formalized and defined in the studies \citep{NeurIPSatypes} and \citep{tsay2000outliers}.
\begin{figure}[h!]
 \centering
 \includegraphics[width=0.86\linewidth]{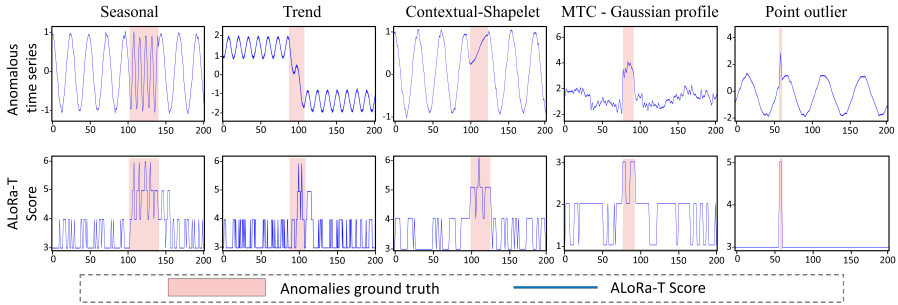}
     \caption{ALoRa-T scores for different anomaly types, following the definitions and categorizations presented in \citep{NeurIPSatypes} and \citep{tsay2000outliers}. The plotted ALoRa-T scores are averaged over 5 runs.}
     \label{fig:anomalyTypes_score_figure}
 \end{figure}

Based on Fig.~\ref{fig:anomalyTypes_score_figure}, there is clear evidence that, across a variety of anomaly types, the rank of the self-attention matrices, and consequently the corresponding ALoRA-T score, increases in the presence of anomalies. This consistent behavior across different anomaly categories demonstrates the robustness and generalizability of the ALoRA-T score in detecting anomalous patterns.

\newpage
\paragraph{Comparison of ALoRa-T Scores with State-of-the-Art Methods:} To support and extend the findings in the main paper, this appendix presents additional visualizations to demonstrate the generalizability of the ALoRa-T anomaly scoring method. First, we provide additional abnormal segments from the SMD and PSM datasets (Fig.~\ref{anomaly_smdpsmSecond}) to reinforce our earlier results. Second, we include anomaly scores from the SWaT and HAI datasets (Fig.~\ref{anomaly_score_figure_swat_hai}) to illustrate ALoRa-T’s effectiveness across multiple domains. Based on Fig.~\ref{anomaly_score_figure_swat_hai} and~\ref{anomaly_smdpsmSecond}, the ALoRa-T score continues to stand out for its effectiveness to capture the temporal characteristics of anomalies, providing timely and highly informative responses to anomalous patterns. The score rises quickly and remains aligned with the duration and structure of the anomalies. In some cases, the ALoRa-T score remains high for a short period of time, even after the labeled anomaly segment ends. However, this behavior can be meaningfully interpreted: the self-attention matrices at post-anomaly timesteps still incorporate abnormal information from earlier points. The score gradually decreases after \(T\) timesteps—where \(T\) is the window size, once the attention mechanism no longer includes the earlier abnormal data. This demonstrates the effectiveness of the ALoRa-T score to accurately detect anomalies and to provide interpretable signals for both true and false alarms. In contrast, MEMTO and Anomaly Transformer still behave similarly to their performance in the main results (see Fig.~\ref{anomaly_score_figure}), where their outputs resemble random guessing. This makes their detection signals unreliable. SARAD and NPSR produce more informative patterns. However, they frequently fail to capture the temporal dynamics of anomalies accurately, often resulting in delayed or unsuccessful detection. 

\begin{figure}[ht]
  \centering
\includegraphics[width=1.0\linewidth,height=10cm]{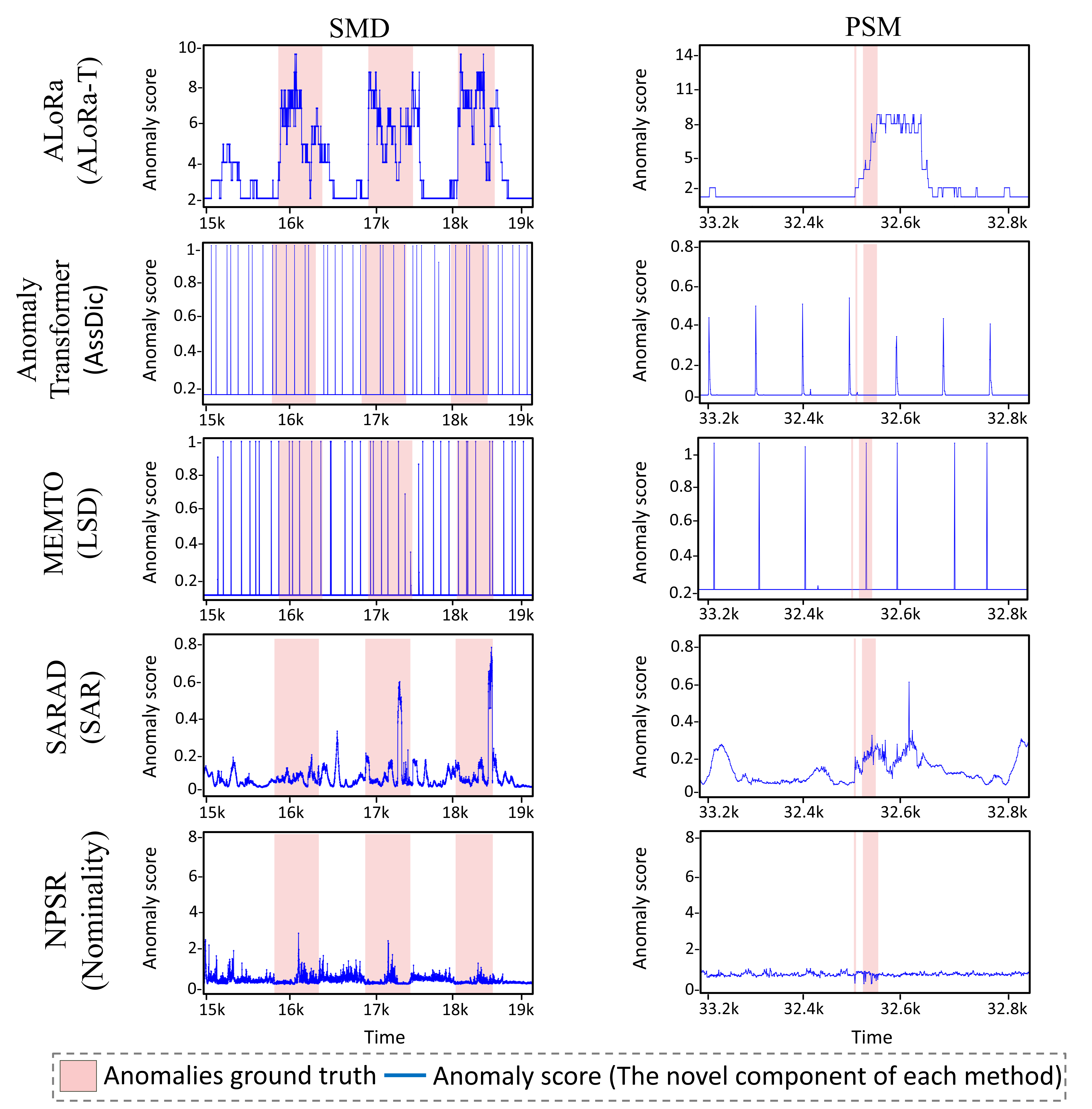}
    \caption{Anomaly scores for the SMD and PSM, datasets. The red segment indicates the ground truth of anomalies}
    \label{anomaly_smdpsmSecond}
\end{figure}
\newpage
 \begin{figure}[H]
  \centering
\includegraphics[width=1.0\linewidth,height=10cm]{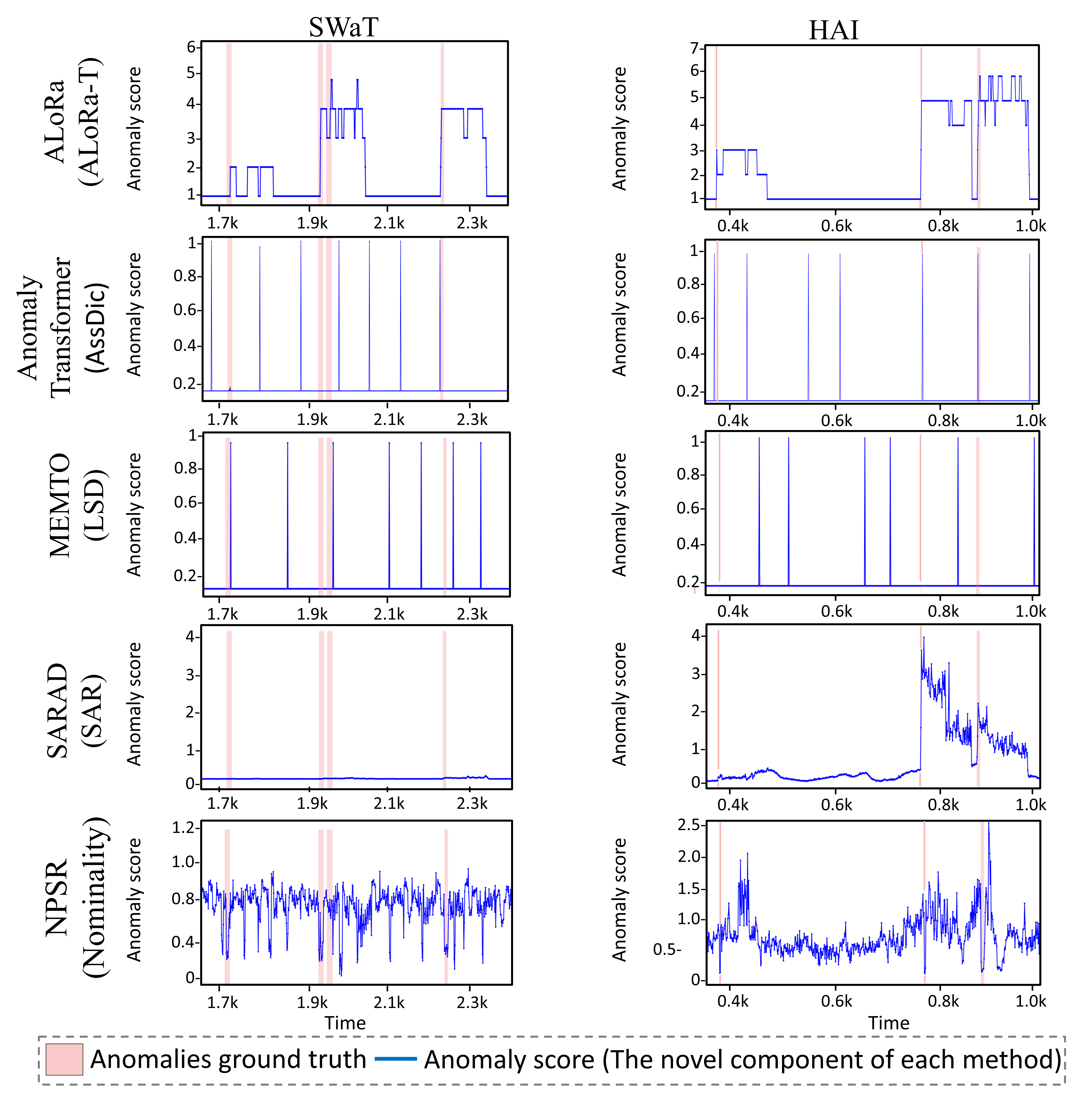}
    \caption{Anomaly scores for the SWat and HAI, datasets. The red segment indicates the ground truth of anomalies}
    \label{anomaly_score_figure_swat_hai}
\end{figure}

\newpage

\paragraph{Standard Deviation of Anomaly Diagnosis Results:} 
We report the standard deviations of the anomaly diagnosis results presented in Section~\ref{section6}, as well as for the additional evaluation metrics presented in Table~\ref{range_auc_results}. Specifically, Table~\ref{std_metrics_detection} shows the standard deviations of the detection results reported in Table~\ref{main_results}, while Table~\ref{range_std_results} reports the standard deviations for the results in Table~\ref{range_auc_results}. Finally, Table~\ref{localization_std_results} presents the standard deviations of the localization results shown in Table~\ref{localization_results}.

\begin{table*}[h!]
\caption{
\textbf{Standard deviation of detection metrics}, for the results in Table~\ref{main_results}. P, R and  \(F_1\) denote the standard deviation of Precision, Recall, and  \(F_1\)-score respectively.}
\centering
\scriptsize
\setlength{\tabcolsep}{3pt}
\begin{tabular}{@{}c|ccc|ccc|ccc|ccc|ccc@{}}
\toprule
 & \multicolumn{3}{c}{\textbf{SMD}} & \multicolumn{3}{c}{\textbf{PSM}} & \multicolumn{3}{c}{\textbf{MSL}} & \multicolumn{3}{c}{\textbf{SWaT}} & \multicolumn{3}{c}{\textbf{HAI}}  \\ 
\cmidrule(l){2-16}
 \textbf{Method} & P & R &  \(F_1\) & P & R & \(F_1\) & P & R & \(F_1\) & P & R & \(F_1\) & P & R & \(F_1\)  \\
\midrule
KNN & 0.006 & 0.004 & 0.005 & 0.003 & 0.002 & 0.002 & 0.009 & 0.006 & 0.008 & 0.003 & 0.002 & 0.002 & 0.006 & 0.004 & 0.005 \\
PCA & 0.009 & 0.007 & 0.008 & 0.006 & 0.004 & 0.005 & 0.003 & 0.002 & 0.002 & 0.009 & 0.007 & 0.008 & 0.003 & 0.002 & 0.002 \\
LOF & 0.003 & 0.002 & 0.002 & 0.009 & 0.007 & 0.008 & 0.006 & 0.004 & 0.005 & 0.006 & 0.005 & 0.005 & 0.003 & 0.002 & 0.002 \\
OC-SVM & 0.006 & 0.004 & 0.005 & 0.006 & 0.004 & 0.005 & 0.003 & 0.002 & 0.002 & 0.009 & 0.007 & 0.008 & 0.009 & 0.007 & 0.008 \\
IsolationForest & 0.009 & 0.007 & 0.008 & 0.003 & 0.002 & 0.002 & 0.006 & 0.004 & 0.005 & 0.006 & 0.004 & 0.005 & 0.009 & 0.007 & 0.008 \\
OmniAnomaly & 0.022 & 0.018 & 0.020 & 0.012 & 0.009 & 0.010 & 0.022 & 0.017 & 0.020 & 0.011 & 0.009 & 0.010 & 0.012 & 0.008 & 0.010 \\
InterFusion & 0.012 & 0.009 & 0.010 & 0.021 & 0.018 & 0.020 & 0.012 & 0.009 & 0.010 & 0.021 & 0.018 & 0.020 & 0.012 & 0.009 & 0.010 \\
A.T. & 0.021 & 0.023 & 0.020 & 0.020 & 0.03 & 0.030 & 0.020 & 0.016 & 0.018 & 0.017 & 0.014 & 0.016 & 0.02 & 0.016 & 0.017 \\
DAEMON & 0.014 & 0.009 & 0.012 & 0.010 & 0.009 & 0.011 & 0.011 & 0.010 & 0.011 & 0.018 & 0.017 & 0.018 & 0.013 & 0.008 & 0.011  \\
DCdetector & 0.022 & 0.021 & 0.021 & 0.023 & 0.024 & 0.023 & 0.017 & 0.020 & 0.019 & 0.016 & 0.015 & 0.015 & 0.019 & 0.018 & 0.018 \\
MEMTO & 0.065 & 0.055 & 0.060 & 0.009 & 0.006 & 0.007 & 0.009 & 0.007 & 0.008 & 0.025 & 0.019 & 0.022 & 0.053 & 0.045 & 0.050 \\
NPSR & 0.003 & 0.002 & 0.002 & 0.002 & 0.001 & 0.001 & 0.002 & 0.001 & 0.001 & 0.003 & 0.002 & 0.002 & 0.004 & 0.003 & 0.003 \\
$D^3R$ & 0.005 & 0.003 & 0.004 & 0.004 & 0.003 & 0.003 & 0.002 & 0.001 & 0.001 & 0.008 & 0.006 & 0.007 & 0.006 & 0.004 & 0.005 \\
SARAD & 0.10 & 0.028 & 0.022 & 0.13 & 0.08 & 0.021 & 0.021 & 0.044 & 0.018 & 0.025 & 0.022 & 0.013 & 0.017 & 0.10 & 0.110 \\
\textbf{ALoRa-Det} & 0.011 & 0.011 & 0.012 & 0.013 & 0.014 & 0.014 & 0.012 & 0.011 & 0.011 & 0.012 & 0.011 & 0.010 & 0.010 & 0.009 & 0.015 \\
\bottomrule
\label{std_metrics_detection}
\end{tabular}

\end{table*}

\begin{table}[ht]
\centering
\caption{
\textbf{Standard deviation of localization scores}, for the results in Table~\ref{localization_results}. The Interfusion method applies only to segment-based localization, so the standard deviation is reported only for the IPS metric.
}
\resizebox{\textwidth}{!}{
\begin{tabular}{
l|
cc|cc|cc|
cc|cc|cc|
cc|cc|cc
}
\toprule
\multirow{3}{*}{\hspace{1cm}\textbf{Method}} &
\multicolumn{6}{c|}{\textbf{SMD}} &
\multicolumn{6}{c|}{\textbf{MSDS}} &
\multicolumn{6}{c}{\textbf{SWaT}} \\
& \multicolumn{2}{c|}{HR@P} & \multicolumn{2}{c|}{NDCG@P} & \multicolumn{2}{c|}{IPS@P}
& \multicolumn{2}{c|}{HR@P} & \multicolumn{2}{c|}{NDCG@P} & \multicolumn{2}{c|}{IPS@P}
& \multicolumn{2}{c|}{HR@P} & \multicolumn{2}{c|}{NDCG@P} & \multicolumn{2}{c}{IPS@P} \\
& 100 & 150 & 100 & 150 & 100 & 150
& 100 & 150 & 100 & 150 & 100 & 150
& 100 & 150 & 100 & 150 & 100 & 150 \\
\midrule
\hspace{0.5cm} MEMTO         &  0.020     &  0.019     &    0.024   &   0.027    &    0.022   &   0.041    &    0.031  &  0.048    &   0.021   &   0.039 &   0.005    &    0.018   &    0.0012  & 0.0035     &    0.0003   &   0.0012   &  0.0011    & 0.028     \\
\hspace{0.5cm} OMNI       &   0.041    &   0.041    &    0.045      &   0.045       &    0.028    &  0.029     &      0.04    &   0.03    &    0.03      &   0.04       &    0.002    &  0.002    &    0.02   & 0.025      &  0.033     &   0.022    &   0.01    &  0.01     \\
\hspace{0.5cm} Interfusion   &  -     &      - &    -   &      - &    0.029    &  0.031    &   -    &     -  &     -  &    -   &   0.001    & 0.001      &  -     &  -     &     -  &    -   &  0.01     &  0.01     \\
\hspace{0.5cm} SARAD         &  0.018     &  0.017     &    0.023   &   0.025    &    0.021   &   0.043    &    0.03   &  0.05     &   0.02    &   0.04  &   0.006    &    0.02   &    0.001   & 0.003      &    0.0002   &   0.001    &  0.001     & 0.03      \\
\hspace{0.5cm} DAEMON         &   0.05   &   0.08  &     0.07  &    0.06  &   0.006    &   0.01   &    0.06   &   0.06    &    0.05   &  0.05   &   0.01    &     0.01  &    0.007   &     0.006    &    0.005     &   0.005    &     0.01  &  0.009     \\
\hspace{0.5cm} AERCA         &  0.02    &    0.01  &    0.01   &   0.01   &     0.01  &  0.02    &   0.01    &   0.01    &    0.01   &  0.01   &   0.01    &   0.01    &     0.01  &    0.01   &   0.02    &   0.02    &     0.02  &    0.02   \\
\hspace{0.2cm} \textbf{ALoRa-Loc}   & 0.05      &  0.05     &    0.06   &   0.06    &    0.09   &   0.07    &   0.01    &   0.05    &    0.01   &   0.02    &   0.001    &   0.003    &   0.006    &   0.005   &     0.006  &    0.001   &    0.01   &    0.007   \\
\bottomrule
\label{localization_std_results}
\end{tabular}
}
\end{table}

\begin{table*}[h!]
\caption{
\textbf{Standard deviation} for the results in Table~\ref{range_auc_results}. \(RF_1\) denotes Range-based \(F_1\)-score; \(V_{\text{ROC}}\) and \(V_{\text{PR}}\) denote the Volume Under the Surface ROC and PR, respectively.
}
\centering
\scriptsize
\setlength{\tabcolsep}{4pt}
\begin{tabular}{@{}c|ccc|ccc|ccc|ccc|ccc@{}}
\toprule
 & \multicolumn{3}{c}{\textbf{SMD}} & \multicolumn{3}{c}{\textbf{PSM}} & \multicolumn{3}{c}{\textbf{MSL}} & \multicolumn{3}{c}{\textbf{SWaT}} & \multicolumn{3}{c}{\textbf{HAI}}  \\ 
\cmidrule(l){2-16}
\textbf{Method} & \(RF_1\) & \(V_{\text{ROC}}\) & \(V_{\text{PR}}\) & \(RF_1\) & \(V_{\text{ROC}}\) & \(V_{\text{PR}}\) & \(RF_1\) & \(V_{\text{ROC}}\) & \(V_{\text{PR}}\) & \(RF_1\) & \(V_{\text{ROC}}\) & \(V_{\text{PR}}\) & \(RF_1\) & \(V_{\text{ROC}}\) & \(V_{\text{PR}}\) \\
\midrule
KNN & 0.006 & 0.008 & 0.004 & 0.005 & 0.006 & 0.005 & 0.007 & 0.009 & 0.006 & 0.004 & 0.006 & 0.005 & 0.006 & 0.007 & 0.005 \\
PCA & 0.009 & 0.012 & 0.006 & 0.008 & 0.010 & 0.007 & 0.006 & 0.009 & 0.005 & 0.010 & 0.013 & 0.007 & 0.007 & 0.009 & 0.006 \\
IsolationForest & 0.012 & 0.014 & 0.007 & 0.010 & 0.012 & 0.009 & 0.009 & 0.011 & 0.006 & 0.011 & 0.013 & 0.008 & 0.010 & 0.012 & 0.007 \\
OC-SVM & 0.010 & 0.011 & 0.006 & 0.011 & 0.012 & 0.008 & 0.007 & 0.010 & 0.006 & 0.012 & 0.014 & 0.009 & 0.011 & 0.013 & 0.007 \\
OmniAnomaly & 0.020 & 0.022 & 0.012 & 0.017 & 0.020 & 0.011 & 0.019 & 0.021 & 0.013 & 0.018 & 0.020 & 0.011 & 0.019 & 0.022 & 0.013 \\
InterFusion & 0.018 & 0.020 & 0.011 & 0.019 & 0.022 & 0.012 & 0.017 & 0.019 & 0.011 & 0.018 & 0.021 & 0.012 & 0.020 & 0.022 & 0.013 \\
A.T. & 0.015 & 0.018 & 0.010 & 0.017 & 0.020 & 0.012 & 0.016 & 0.018 & 0.011 & 0.019 & 0.021 & 0.012 & 0.018 & 0.020 & 0.012 \\
MEMTO & 0.025 & 0.028 & 0.015 & 0.021 & 0.025 & 0.013 & 0.023 & 0.027 & 0.015 & 0.026 & 0.030 & 0.016 & 0.022 & 0.026 & 0.014 \\
NPSR & 0.007 & 0.009 & 0.005 & 0.008 & 0.010 & 0.006 & 0.009 & 0.011 & 0.006 & 0.007 & 0.009 & 0.005 & 0.008 & 0.010 & 0.006 \\
\(D^3R\) & 0.011 & 0.013 & 0.007 & 0.010 & 0.012 & 0.007 & 0.011 & 0.013 & 0.007 & 0.012 & 0.014 & 0.008 & 0.011 & 0.013 & 0.007 \\
SARAD & 0.013 & 0.016 & 0.009 & 0.015 & 0.018 & 0.010 & 0.014 & 0.017 & 0.010 & 0.015 & 0.018 & 0.011 & 0.014 & 0.017 & 0.010 \\
\midrule
\textbf{ALoRa-Det} & 0.009 & 0.011 & 0.006 & 0.010 & 0.012 & 0.007 & 0.009 & 0.011 & 0.006 & 0.010 & 0.012 & 0.007 & 0.009 & 0.011 & 0.006 \\
\bottomrule
\end{tabular}
\label{range_std_results}
\end{table*}
\subsection*{Pseudocodes for ALoRa-Det and ALoRa-Loc:}

\begin{algorithm}[H]
\algrenewcommand\algorithmicrequire{\textbf{Input:}}
\algrenewcommand\algorithmicensure{\textbf{Output:}}
\algrenewcommand{\algorithmiccomment}[1]{\hskip3em$\triangleright$ #1}
\caption{ALoRa-Det: Attention Low-Rank Transformer for MTS Anomaly Detection}
\label{algo:aloradet}
\begin{algorithmic}[1]

\Require $X \in \mathbb{R}^{T \times d}$: Multivariate Time Series  
\Statex \hspace{0.5cm} $H$: Number of attention heads, $L$: Number of Transformer layers.
\Statex \hspace{0.5cm} $d_{\text{model}}$: Embedding dimension, $\lambda_{\text{reg}}$: ALoRa regularization coefficient.  
\vspace{0.1cm}
\Ensure Anomaly scores $\mathrm{AS}(x_t) \in \mathbb{R}$ for all $t$
\vspace{0.1cm}

\Statex \textbf{Stage 1: LightMTS-Embed}  \Comment{see Section~5}  
\State Define kernel size $m$ (default: 3) and time series pair set $\mathcal{P} = \{(i, j) \mid 1 \leq i < j \leq d\}$  
\State Define sparse kernel tensor $W \in \mathbb{R}^{|\mathcal{P}| \times d \times m}$
\State $\tilde{X}_{:, k} = \text{Conv1D}(X; W_k) \quad \forall k \in [1, |\mathcal{P}|]$
\vspace{0.1cm}
\Statex \textbf{Stage 2: ALoRa-T — Attention Low-Rank Transformer}
\State Initialize total ALoRa loss: $\mathcal{L}_{\text{ALoRa}} \gets 0$
\For{$l = 1, \ldots, L$}
    \State $Z^{(l)} \gets \text{MHA}(X^{(l-1)}) + X^{(l-1)}, X^{(0)} = \tilde{X}$ \Comment{MHA + Residual connection}
    \State $X^{(l)} \gets Z^{(l)}$ \Comment{(Optionally apply an activation function, e.g., GELU)}
    \State Compute  $\mathcal{L}_{\text{ALoRa}}(S^{(l)})$ \Comment{Eq.~(\ref{eqAloraSl})}, $S^{(l)} = \frac{1}{H} \sum_{h=1}^{H} S^{(l)}_h$
    \State $\mathcal{L}_{\text{ALoRa}} \gets \mathcal{L}_{\text{ALoRa}} + \mathcal{L}_{\text{ALoRa}}(S^{(l)})$
\EndFor

\vspace{0.2cm}
\Statex \textbf{Stage 3: Reconstruction and Training:}

\vspace{0.1cm}

\State Reconstruction:  \( \hat{X} = Z^{(L)} W^{\text{out}} \in \mathbb{R}^{T \times d} \)
\State Training Objective:\( \mathcal{L}_{\text{total}} = \|X - \hat{X}\|_2^2 + \lambda_{\text{reg}} \cdot \mathcal{L}_{\text{ALoRa}} \) \Comment{Eq. (\ref{obFunction})}
\vspace{0.2cm}
\Statex \textbf{Stage 4: Inference - Anomaly Detection}
\State Anomaly score for time $t$ :  $\mathrm{AS}(x_t) = \|x_t - \hat{x}_t\|_2^2 \cdot \text{ALoRa-T}(x_t)$ \Comment{Eq. \ref{SALRS}}

\State \textbf{return} $\mathrm{AS}(x_t)$

\end{algorithmic}
\end{algorithm}

\begin{algorithm}[H]
\algrenewcommand\algorithmicrequire{\textbf{Input:}}
\algrenewcommand\algorithmicensure{\textbf{Output:}}
\algrenewcommand{\algorithmiccomment}[1]{\hskip3em$\triangleright$ #1}
\caption{ALoRa-Loc: Attention Low-Rank Transformer for MTS Anomaly Localization}
\label{algo:aloraloc}
\begin{algorithmic}[1]

\Statex \textbf{Stage 1: ALoRa-T Parameter Tracking During Final Epoch of Training}

\State During the last epoch of ALoRa-Det training:
\vspace{0.1cm}
\State Track LightMTS-Embed convolution kernels \( W_k \in \mathbb{R}^{d \times m}\) for all \( k = 1, \ldots, d_{\text{model}} \)
\vspace{0.2cm}

\For{each Transformer layer $l = 1, \ldots, L$}
    \State Track value projection matrix $W^{(\mathcal{V}, l)} \in \mathbb{R}^{d_{\text{model}} \times d_{\text{model}}}$
    \Statex \hskip1.5em \Comment{If MHA is used: then $W^{(\mathcal{V}, l)} = [W^{(\mathcal{V}, 1, l)}, \ldots, W^{(\mathcal{V}, H, l)}] \cdot W^{\text{V-proj}, l}$, 
    where each $W^{(\mathcal{V}, h, l)} \in \mathbb{R}^{d_{\text{model}} \times \frac{d_{\text{model}}}{H}  }$ and $W^{\text{V-proj}, l} \in \mathbb{R}^{\frac{d_{\text{model}}}{H}  \times d_{\text{model}}}$, $H$ is the number of heads}
\EndFor

\vspace{0.2cm}
\State Keep track of output projection matrix $W^{\text{out}} \in \mathbb{R}^{d_{\text{model}} \times d}$   \Comment (Eq. (\ref{recostrctionEq}))

\vspace{0.3cm}
\Statex \textbf{Stage 2: Compute Input-to-Output Contribution Matrices :} \Comment{(see Section~4)}

\vspace{0.2cm}
\State Compute contribution matrices $B \in \mathbb{R}^{d_{\text{model}} \times d_{\text{model}}}$ and  $E \in \mathbb{R}^{d \times d_{\text{model}}}$ \Comment{see Section~4}

\vspace{0.1cm}

\State Contribution of input time series to the reconstructed ones: $C \in \mathbb{R}^{d \times d}$  \Comment{Eq. (\ref{effects_input_out})}
\vspace{0.2cm}
\Statex \textbf{Stage 3: Inference — Anomaly Localization}
\vspace{0.1cm}

\State ALoRa-Loc: $\mathrm{LAS}_t^{(i)} = \sum_{j=1}^{d} C_{ij} \cdot \|x_t^{(j)} - \hat{x}_t^{(j)}\|_2^2$

\State ALoRa-Loc (top-$k$): $\mathrm{LAS}_t^{(i, topk)} = \sum_{j \in \mathcal{I}_k^{(i)}} C_{ij} \cdot \|x_t^{(j)} - \hat{x}_t^{(j)}\|_2^2$ \State \Comment{$\mathcal{I}_k^{(i)}$: indices of top-$k$ values in row $i$ of $C$}

\State ALoRa-Loc$^*$:  $\mathrm{LAS}_t^{*(i)} = \|x_t^{(i)} - \hat{x}_t^{(i)}\|_2^2$

\vspace{0.1cm}
\State \textbf{return} $\mathrm{LAS}_t^{(i)}$, $\mathrm{LAS}_t^{(i, topk)}$, and $\mathrm{LAS}_t^{*(i)}$ for all $i, t$

\end{algorithmic}
\end{algorithm}

\newpage
\section{Analytical proofs}
\label{abalyticalproofs}

\begin{proof}[Proof of Proposition~\ref{prop:star}]
\label{proof:star}
To begin the proof, we first define the Space-Time Autoregressive (STAR) model.  
Let \( \vy_t \in \mathbb{R}^{1 \times d} \), for \( t = 1, \dots, N \), denote the multivariate observation at time step \( t \), where \( y_t^{(i)} \in \mathbb{R} \) represents the value of the \( i \)-th series at time \( t \).
The STAR model assumes that each time series is influenced by its own past values as well as by the past values of spatially neighboring series.

\textbf{STAR Model Definition.}  
For a STAR model with $p+1$ temporal lags, and $q+1$ spatial lags, the model is defined as
\begin{equation}
\label{generalSTARdefinition}
 \vy_t = \sum_{k=0}^p \sum_{l=0}^{q} a^{(l)}_{tk}  \, \vy_{t-k} \mB^{(l)} = \sum_{l=0}^{q} \mA^{(l)}_t \, \mX \, \mB^{(l)}
\end{equation}
where $\mB^{(l)} \in \mathbb{R}^{d \times d}$ is the spatial weight matrix of order $l$, and $B^{(l)}_{ij}$ represents the spatial dependency of time series $i$ on time series $j$ in the $l$-th spatial lag. The coefficient $a^{(l)}_{tk}$ denotes the temporal weight for lag $k$ at time $t$, corresponding to the $l$-th spatial lag. The final equality follows by defining the matrix $\mX = [\vy_{t-p}, \cdots, \vy_{t}]^{\top} \in \mathbb{R}^{p \times d}$. In the case where the spatial dependence can be represented as a single lag (i.e., the spatial lags are collapsed into one), the model simplifies to:\\

\textbf{STAR model with $p+1$ temporal lags and one spatial lag:}
\begin{equation}
 \vy_t = \sum_{k=0}^p a_{tk}  \, \vy_{t-k} \mB =  \ \mA_t \, \mX \, \mB,
\end{equation}
where $\mA_t \in \mathbb{R}^{1 \times p}$ is the vector of temporal weights used for modeling time step $t$. Finally, the $j$-th component of $\vy_t$, denoted by $y_t^{(j)}$, can be written as:
\begin{equation}
\label{stardefinition_icomp}
 y_t^{(j)} = \sum_{k=0}^p \sum_{i=1}^d A_{tk} \, B_{ij} \, y_{t-k}^{(i)}  = \sum_{i=1}^d \,  B_{ij} \, \left( \sum_{k=0}^p A_{tk}\, y_{t-k}^{(i)}\right)
\end{equation}
The STAR model can be fitted by minimizing the least square error between the observed data $\vy_t$ and the fitted values. We note that all the learnable weights are not input-dependent.


\textbf{Statement 1:}
When no skip connections are used in the self-attention mechanism, the update rule at layer \(l\) is defined as:
\[
\mZ^{(l)} = \mS^{(l)} \mZ^{(l-1)} \mW^{(V,l)}, \quad \text{where}, \quad \mZ^{(0)} = \tilde{\mX} 
\]
By unrolling the above equation, it follows that after \( L \) layers of self-attention, the latent representation at time step \( t \) can be expressed as:

\begin{equation}\label{eq:LatentSAExpand}
\mZ^{(L)}_t = \mS^{(L)}_t \mS^{(L-1)} \cdots \mS^{(1)} \tilde{\mX}\mW^{(V,1)} \cdots \mW^{(V,L)}
\end{equation}
where each \( \mS^{(i)} \in \mathbb{R}^{T \times T} \) is the SA-matrix at layer \( i \), and \( \mS^{(L)}_t \in \mathbb{R}^{1 \times T} \) is the \(t'th\) row (last one) of the final SA-matrix. Each \( W^{(V,i)} \in \mathbb{R}^{d_{\text{model}} \times d_{\text{model}}} \) is the value projection matrix at layer \( i \). By defining the product of attention matrices up to layer \( L \) at time step \( t \) as \( \mA_t = \mS^{(L)}_t \mS^{(L-1)} \cdots \mS^{(1)} \in \mathbb{R}^{1 \times T} \), and the product of all value projection matrices as \( \mB = \mW^{(V,1)} \cdots \mW^{(V,L)} \in \mathbb{R}^{d_{\text{model}} \times d_{\text{model}}} \), the final latent representation is compactly expressed by:
\begin{equation}
\label{eq:AppendixLatentspaceZ}
    \mZ_t = \mA_t \tilde{\mX} \mB \in \mathbb{R}^{1 \times d_{\text{model}}}
\end{equation}
Then, by expanding the above equation for each component, we can derive its analytical expression. The expression for each \textbf{latent space} time series at time step \( t \) is given by:

\[
z^{(j)}_t = \sum_{k=1}^{d_{\text{model}}} b_{kj} \left( \sum_{q=1}^t a_{tq}\,\tilde{x}^{(k)}_q \right),
\]  
By comparing the derived equation with Eq.~(\ref{stardefinition_icomp}), it is evident that the two models share the same structure. The key difference lies in how the weights \( a_{tq} \) are obtained: traditional STAR models use fixed lag weights estimated by minimizing a loss function (e.g., mean squared error), whereas the Transformer computes these weights dynamically through the attention mechanism, with the queries \( Q \) and keys \( K \) determining them in real time.
  
\textbf{Statement 2:}\\
When skip connections are applied in the self-attention mechanism, the update rule at layer \(l\) is defined as:
\begin{equation}\label{eq:skip-dynamics}
\mZ^{(l)} = \tilde{\mZ}^{(l)} + \mZ^{(l-1)}, 
\quad \text{where} \quad 
\tilde{\mZ} = \mS^{(l)} \mZ^{(l-1)} \mW^{(V,l)}.
\end{equation} 

To clarify the structure of the derived Eq.~(\ref{eq:general-skip}), assume without loss of generality that the total number of layers is \( L = 3 \). Then, the final representation can be written as:

\begin{equation}
\label{eq:latent_3layer_xt_proof}
\begin{aligned}
z_t^{(3)} =\ 
& y_t 
+ \underbrace{\mS_t^{(1)}}_{\mA_t^{(1)}} \tilde{\mX} \underbrace{\mW^{(\mathcal{V},1)}}_{\mB^{(1)}} 
+ \underbrace{\mS_t^{(2)}}_{\mA_t^{(2)}} \tilde{\mX} \underbrace{\mW^{(\mathcal{V},2)}}_{\mB^{(2)}} 
+ \underbrace{\mS_t^{(3)}}_{\mA_t^{(3)}} \tilde{\mX} \underbrace{\mW^{(\mathcal{V},3)}}_{\mB^{(3)}} \\[6pt]
&+ \underbrace{\mS_t^{(2)} \mS_t^{(1)}}_{\mA_t^{(4)}} \tilde{\mX} \underbrace{\mW^{(\mathcal{V},1)} \mW^{(\mathcal{V},2)}}_{\mB^{(4)}} \\
&+ \underbrace{\mS_t^{(3)} \mS_t^{(1)}}_{\mA_t^{(5)}} \tilde{\mX} \underbrace{\mW^{(\mathcal{V},1)} \mW^{(\mathcal{V},3)}}_{\mB^{(5)}} \\
&+ \underbrace{\mS_t^{(3)} \mS_t^{(2)}}_{\mA_t^{(6)}} \tilde{\mX} \underbrace{\mW^{(\mathcal{V},2)} \mW^{(\mathcal{V},3)}}_{\mB^{(6)}} \\ 
&+ \underbrace{\mS_t^{(3)} \mS_t^{(2)} \mS_t^{(1)}}_{\mA_t^{(7)}} \tilde{\mX} \underbrace{\mW^{(\mathcal{V},1)} \mW^{(\mathcal{V},2)} \mW^{(\mathcal{V},3)}}_{\mB^{(7)}} .
\end{aligned}
\end{equation}

Each component of Eq.~(\ref{eq:latent_3layer_xt_proof}) has the form \( \mA_t^{(j)} \tilde{\mX} \mB^{(j)} \) for \( j = 0, \ldots, 2^L - 1 \), where \( \mA_t^{(j)} \in \mathbb{R}^{1 \times T} \) denotes the row vector appearing to the left of \( \tilde{\mX} \), and \( \mB^{(j)} \in \mathbb{R}^{d_{\text{model}} \times d_{\text{model}}} \) denotes the matrix product appearing to the right of \( \tilde{\mX} \) in the \( j \)-th term. For example, for \( j = 4 \): \( \mA_t^{(4)} = \mS_t^{(2)} \mS_t^{(1)} and \mB^{(4)} = \mW^{(\mathcal{V},1)} \mW^{(\mathcal{V},2)}\). As shown in the proof of Statement 1, each such component induces a structure analogous to that of a STAR model. Thus, the overall latent representation \( \mZ_t \) can be interpreted as a \textbf{linear combination of multiple STAR-like processes}, where each component has its own temporal (\(a_{tq}\)) and spatial weights (\(b_{kj}\)). In other words, each term \( A_t^{(j)} \tilde{\mX} B^{(j)} \) defines a distinct STAR-like model with potentially different spatial and temporal lag structures. 

We can obtain a more specific interpretation of this representation by comparing Eq.~(\ref{eq:latent_3layer_xt_proof}) with the general definition of a STAR model in Eq.~(\ref{generalSTARdefinition}). In particular, Eq.~(\ref{eq:latent_3layer_xt_proof}) corresponds to a STAR model with $p = T$ (the window size) temporal lags and $q = 8 = \binom{L}{2}$ spatial lags, where $\mB^{(0)} = \mI_d$ and $\mA^{(0)}_t = \ve^{(T)} = [0, 0, \dots, 1]$. The explicit structure of each \(\mA^{(l)}_t\) and \(\mB^{(l)}\) is indicated by the underbraces in Eq.~(\ref{eq:latent_3layer_xt_proof}).

\textbf{Statement 3:}\\

The feed-forward layer, linearly combines values across all series at the same time step to produce a new time series $y_t^{(j)}$. We now show that the transformed representation for each time series, after applying the feed-forward layer, can be written in the same structure as in Eq.~(\ref{eq:AppendixLatentspaceZ}):
For ease of presentation let
\( T = 2 \) and the latent dimension \( d_{\text{model}} = 3 \). The new \( j \)-th time series at time step \( t \) is defined as: \(
y_t^{(j)} = w_{1j} z_t^{(1)} + w_{2j} z_t^{(2)} + w_{3j} z_t^{(3)}.
\)

Expanding each \( z_t^{(i)} \):

\[
y_t^{(j)} =
w_{1j} \left[
b_{11} \left( a_{t,t-1} \tilde{x}_{t-1}^{(1)} + a_{tt} \tilde{x}_t^{(1)} \right)
+ b_{21} \left( a_{t,t-1} \tilde{x}_{t-1}^{(2)} + a_{tt} \tilde{x}_t^{(2)} \right)
+ b_{31} \left( a_{t,t-1} \tilde{x}_{t-1}^{(3)} + a_{tt} \tilde{x}_t^{(3)} \right)
\right]
\]

\[
+ w_{2j} \left[
b_{12} \left( a_{t,t-1} \tilde{x}_{t-1}^{(1)} + a_{tt} \tilde{x}_t^{(1)} \right)
+ b_{22} \left( a_{t,t-1} \tilde{x}_{t-1}^{(2)} + a_{tt} \tilde{x}_t^{(2)} \right)
+ b_{32} \left( a_{t,t-1} \tilde{x}_{t-1}^{(3)} + a_{tt} \tilde{x}_t^{(3)} \right)
\right]
\]

\[
+ w_{3j} \left[
b_{13} \left( a_{t,t-1} \tilde{x}_{t-1}^{(1)} + a_{tt} \tilde{x}_t^{(1)} \right)
+ b_{23} \left( a_{t,t-1} \tilde{x}_{t-1}^{(2)} + a_{tt} \tilde{x}_t^{(2)} \right)
+ b_{33} \left( a_{t,t-1} \tilde{x}_{t-1}^{(3)} + a_{tt} \tilde{x}_t^{(3)} \right)
\right]
\]
By regrouping the terms,
\[
\left( a_{t,t-1} \tilde{x}_{t-1}^{(1)} + a_{tt} \tilde{x}_t^{(1)} \right)
\underbrace{\left( w_{1j} b_{11} + w_{2j} b_{12} + w_{3j} b_{13} \right)}_{\tilde{b}_{1j}}
\]
\[
+ \left( a_{t,t-1} \tilde{x}_{t-1}^{(2)} + a_{tt} \tilde{x}_t^{(2)} \right)
\underbrace{\left( w_{1j} b_{21} + w_{2j} b_{22} + w_{3j} b_{23} \right)}_{\tilde{b}_{2j}}
\]
\[
+ \left( a_{t,t-1} \tilde{x}_{t-1}^{(3)} + a_{tt} \tilde{x}_t^{(3)} \right)
\underbrace{\left( w_{1j} b_{31} + w_{2j} b_{32} + w_{3j} b_{33} \right)}_{\tilde{b}_{3j}}
\]

so for general latent space dimension \(d_{\text{model}}\) and for general window size \(T\):

\begin{equation}
\label{eq:LRepTildeDef}
y^{(j)}_t = \sum_{k=1}^{d_{model}} \left[ \tilde{b}_{kj} \left( \sum_{q=1}^t a_{tq} \tilde{x}^{(k)}_q \right) \right],
\quad \text{where } \tilde{b}_{kj} = \sum_{r=1}^{d_{\text{model}}} w_{rj} b_{kr}.
\end{equation}
Therefore, from the first statement of the proposition, it follows that the structure of the latent space time series remains unchanged, that is, it retains a STAR-like form. The only difference is that the new weights \( \tilde{b}_{kj} \) now determine the contribution of each input time series, replacing the original weights \( b_{kj} \).
\end{proof}

\subsection{Derivation of contribution weights from input space to latent and reconstruction spaces (related to Section~\ref{MethodLocalization})}
\label{CijandEijAppendix}
The latent representation at the final layer is given by Eq.~(\ref{eq:general-skip}). In Eq.~(\ref{eq:latent_3layer_xt_proof}), we provide an analytical expression for the case of a three-layer model. Each component of Eq.~(\ref{eq:latent_3layer_xt_proof}) has the form \( \mA_t^{(k)} \tilde{\mX} \mB^{(k)} \) for \( k = 0, \ldots, 2^L - 1 \), where \( \mA_t^{(k)} \in \mathbb{R}^{1 \times T} \) denotes the row vector appearing to the left of \( \tilde{\mX} \), and \( \mB^{(k)} \in \mathbb{R}^{d_{\text{model}} \times d_{\text{model}}} \) denotes the matrix product appearing to the right of \( \tilde{\mX} \) in the \( k \)-th term. For example, for \( k = 4 \): \( \mA_t^{(4)} = \mS_t^{(2)} \mS_t^{(1)} \) and \( \mB^{(4)} = \mW^{(\mathcal{V},1)} \mW^{(\mathcal{V},2)} \). Each of these components contributes to the overall contribution weights we aim to derive. As a first step, we consider how each embedding-space time series contributes to the representation of the \( j \)-th time series (column) in the \( k \)-th component of Eq.~(\ref{eq:latent_3layer_xt_proof}) ( \( \mA_t^{(k)} \tilde{\mX} \mB^{(k)} \)).  The \( j \)-th time series (column) in the \( k \)-th component, can be expressed as:

\[
z^{(j,k)}_t = \sum_{r=1}^{d_{\text{model}}} b^{(k)}_{rj} \left( \sum_{q=1}^t a^{(k)}_{tq}\,\tilde{x}^{(r)}_q \right),
\]  
where \( b^{(k)}_{rj} \) denotes the \((r,j)\)-th entry of \(\mB^{(k)} \) and measures how strongly the \(r\)-th embedding-space time series contributes to the \( j \)-th latent series in the \( k \)-th component. The coefficient \(a^{(k)}_{tq} \) is the \((t,q)\)-th entry of \(\mA_t^{(k)} \) and serves as a temporal weighting factor. Importantly, the dependence on \( a^{(k)}_{tq} \) does not alter the interpretation of \( b^{(k)}_{rj}\). To illustrate this, we consider the case \(T = 2\) and \(d_{\text{model}} = 3\). In this setting, the three latent space time series are given by:
\[ z_t^{(1,k)} = b^{(k)}_{11} (a^{(k)}_{t,t-1} \tilde{x}_{t-1}^{(1)} + a^{(k)}_{t,t} \tilde{x}_t^{(1)}) + b^{(k)}_{21} (a^{(k)}_{t,t-1} \tilde{x}_{t-1}^{(2)} + a^{(k)}_{t,t} \tilde{x}_t^{(2)}) +b^{(k)}_{31} (a^{(k)}_{t,t-1} \tilde{x}_{t-1}^{(3)} + a^{(k)}_{t,t} \tilde{x}_t^{(3)})\]
\[z_t^{(2,k)} =b^{(k)}_{12} (a^{(k)}_{t,t-1} \tilde{x}_{t-1}^{(1)} + a^{(k)}_{t,t} \tilde{x}_t^{(1)}) + b^{(k)}_{22} (a^{(k)}_{t,t-1} \tilde{x}_{t-1}^{(2)} + a^{(k)}_{t,t} \tilde{x}_t^{(2)}) + b^{(k)}_{32} (a^{(k)}_{t,t-1} \tilde{x}_{t-1}^{(3)} + a^{(k)}_{t,t} \tilde{x}_t^{(3)})\]
\[z_t^{(3,k)} = b^{(k)}_{13} (a^{(k)}_{t,t-1} \tilde{x}_{t-1}^{(1)} + a^{(k)}_{t,t} \tilde{x}_t^{(1)}) +b^{(k)}_{23} (a^{(k)}_{t,t-1} \tilde{x}_{t-1}^{(2)} + a^{(k)}_{t,t} \tilde{x}_t^{(2)}) +b^{(k)}_{33} (a^{(k)}_{t,t-1} \tilde{x}_{t-1}^{(3)} + a^{(k)}_{t,t} \tilde{x}_t^{(3)})\]
In all three equations above: The temporal attention weights \(a^{(k)}_{t,t-1}\) and \(a^{(k)}_{t,t}\) are shared across all parentheses in each equation and are also shared across all equations, as they remain the same for \(z_t^{(1,k)}, z_t^{(2,k)}, z_t^{(3,k)}\). What differentiates how much each embedding space time series (\(\tilde{x}^{(i)}\)) contributes to each latent space series (\(z^{(j,k)}\)) is only the contribution weight \(b^{(k)}_{ij}\).

Then, by summing the contribution weights over all components of Eq.~(\ref{eq:latent_3layer_xt_proof}), we obtain the matrix that characterizes how embedding-space time series contribute to the latent space, defined as  
\[
\mB = \prod_{i=1}^{L} \big(\mW^{(V,i)} + I\big).
\]  
Finally, by accounting for the embedding module, the contribution weights from the raw input time series to the latent space time series are given by:

\begin{equation}
\label{Econtibutions2}
E_{ij} = \sum_{k=1}^{d_{\text{model}}} \left( \sum_{l=-\tfrac{m-1}{2}}^{\tfrac{m-1}{2}} w_{i,l}^{(k)} \right) b_{kj}
\end{equation}

Then, the reconstruction of each time series is produced as follows: $\widehat{x_t^{(k)}} = \sum_{j=1}^{d_{model}} w^{out}_{jk} \, z_t^{(L,j)}$ 

This explains why the final derived contribution weights of the input time series to the reconstructed time series are given by:

$$C_{ij} = \sum_{k=1}^{d_{model}} w_{kj}^{out} E_{ik}$$.

\end{document}